\documentclass{article}

\PassOptionsToPackage{numbers, sort&compress}{natbib}

\usepackage[preprint]{neurips_2026}

\usepackage[utf8]{inputenc} 
\usepackage[T1]{fontenc}    
\usepackage{hyperref}       
\usepackage{url}            
\usepackage{booktabs}       
\usepackage{amsfonts}       
\usepackage{nicefrac}       
\usepackage{microtype}      
\usepackage{xcolor}         
\usepackage{microtype}
\usepackage{graphicx}
\usepackage[table]{xcolor} 
\usepackage{booktabs}      
\usepackage{subcaption}    
\usepackage{threeparttable}
\usepackage{multirow}
\usepackage{hyperref}
\usepackage{amsmath}
\usepackage{amssymb}
\usepackage{mathtools}
\usepackage{amsthm}
\usepackage[capitalize,noabbrev]{cleveref} 

\theoremstyle{plain}
\newtheorem{theorem}{Theorem}[section]
\newtheorem{proposition}[theorem]{Proposition}

\theoremstyle{definition}

\theoremstyle{remark}

\usepackage{algorithm}
\usepackage[endLComment=,italicComments=false]{algpseudocodex}
\definecolor{Gray}{gray}{0.9}
\definecolor{myred}{HTML}{F54254}
\definecolor{myorange}{HTML}{FFB135}
\definecolor{mygreen}{HTML}{10BD35}
\definecolor{myblue}{HTML}{598BE7}
\definecolor{mypurple}{HTML}{9A1C6B}
\definecolor{plgray}{HTML}{999999}
\definecolor{hiccup}{HTML}{001473}
\definecolor{cyl}{HTML}{801dae}
\definecolor{mask}{HTML}{E7F0F9}
\usepackage{booktabs} 
\usepackage{pifont}   

\newcommand{\cmark}{\ding{51}}%
\newcommand{\xmark}{\ding{55}}%
\hypersetup{urlcolor=hiccup, citecolor=hiccup, linkcolor=hiccup}
\usepackage{tabularx}      
\usepackage{ragged2e}      
\newcolumntype{L}{>{\RaggedRight\arraybackslash}X} 

\usepackage[textsize=tiny]{todonotes} 

\usetikzlibrary{shapes.geometric, arrows.meta, positioning, calc, backgrounds, fit}

\usepackage{mathtools} 
\usepackage{amssymb}   
\usepackage{amsthm}    
\usepackage{wrapfig}
\usepackage{bm}        
\usepackage{mathrsfs}  
\usepackage{dsfont}    

\usepackage{mleftright} 
\usepackage{microtype}  

\usepackage{amsmath} 
\DeclareMathOperator*{\E}{\mathbb{E}} 
\usepackage{subcaption}

\DeclareCaptionLabelFormat{subfigwithparent}{Figure~\thefigure#2}
\title{\textit{Stochastic} MeanFlow Policies: One-Step Generative Control with Entropic Mirror Descent}

\author{%
  \bf
    Zeyuan Wang$^{1}$ \quad
    Da Li$^{2,3}$ \quad
    Yulin Chen$^{1}$ \quad
    Yuehu Gong$^{4}$ \\
    \bf
    Yanming Guo$^{1}$ \quad
    Ye Shi$^{5}$ \quad
    Liang Bai$^{1}$ \quad 
    Tianyuan Yu$^{1}$ \quad 
    Yanwei Fu$^{4,6}$ \\[2mm]
    $^{1}$Laboratory for Big Data and Decision, National University of Defense Technology, China \\
    $^{2}$Samsung AI Center Cambridge, UK \\
    $^{3}$Queen Mary University of London, UK \\
    $^{4}$Fudan University, China \\
    $^{5}$ShanghaiTech University, China \\
    $^{6}$Shanghai Innovation Institute, China
}

\begin{document}

\maketitle

\begin{abstract}

Online off-policy reinforcement learning (RL) is governed by two coupled design dimensions: policy representation (e.g. Gaussian vs generative policies) and optimisation methodology (e.g. soft actor-critic (SAC) vs mirror descent (MD)). 
Gaussian policies offer fast inference and tractable entropy estimation but have limited ability to model multimodal action distributions, whereas generative policies provide richer action distributions at the cost of iterative sampling or intractable entropy estimation.
On the optimisation side, SAC-style entropic exploration and MD updates can be interpreted as minimising distinct Kullback-Leibler divergences: SAC-style exploration performs soft policy improvement towards a value-induced Boltzmann distribution, whereas MD constrains updates to remain close to the previous policy.
The combination of SAC-style entropy and MD constraint provides a complementary framework that enables efficient exploration and stable policy updates, yet yields a multimodal target that unimodal Gaussian policies fail to represent.
This representation mismatch raises a research question of whether generative policies can fill this gap.
To this end, we introduce Stochastic MeanFlow Policies (SMFP), a one-step generative policy class that combines MeanFlow-based noise-to-action mappings with Gaussian reparameterisation.
This stochastic reparameterisation enables the derivation of a tractable entropy surrogate for MeanFlow policies while remaining
integrable with off-policy mirror descent, yielding a unified objective for exploratory yet stable policy improvement. Worthy noting, SMFP not only makes entropy-regularised mirror-descent RL practically effective for the first time but also preserves the advantages of both policy representations.
Empirically, on seven MuJoCo benchmarks, SMFP achieves strong performance against both Gaussian and generative baselines while maintaining single-step inference efficiency. 

\end{abstract}

\section{Introduction}
Online off-policy reinforcement learning (RL) has become a dominant paradigm for solving high-dimensional continuous control tasks, ranging from robotic manipulation to autonomous navigation~\cite{levine2018reinforcement, gu2017deep,margolis2023walk, tang2025deep}. Its success is largely governed by two key design axes: policy representation (\emph{how actions are modelled}) and optimisation methodology (\emph{how policies are updated}).

Along the axis of policy representation, a fundamental trade-off arises between tractability and expressivity. Standard approaches rely on unimodal Gaussian policies~\cite{dankwa2019td3, haarnoja2018soft}, which provide closed-form likelihoods, efficient entropy estimation, 
and fast inference, but are fundamentally restricted to 
\begin{wrapfigure}{r}{0.48\textwidth}
    \centering
    \vspace{-20pt}
    \includegraphics[width=\linewidth]{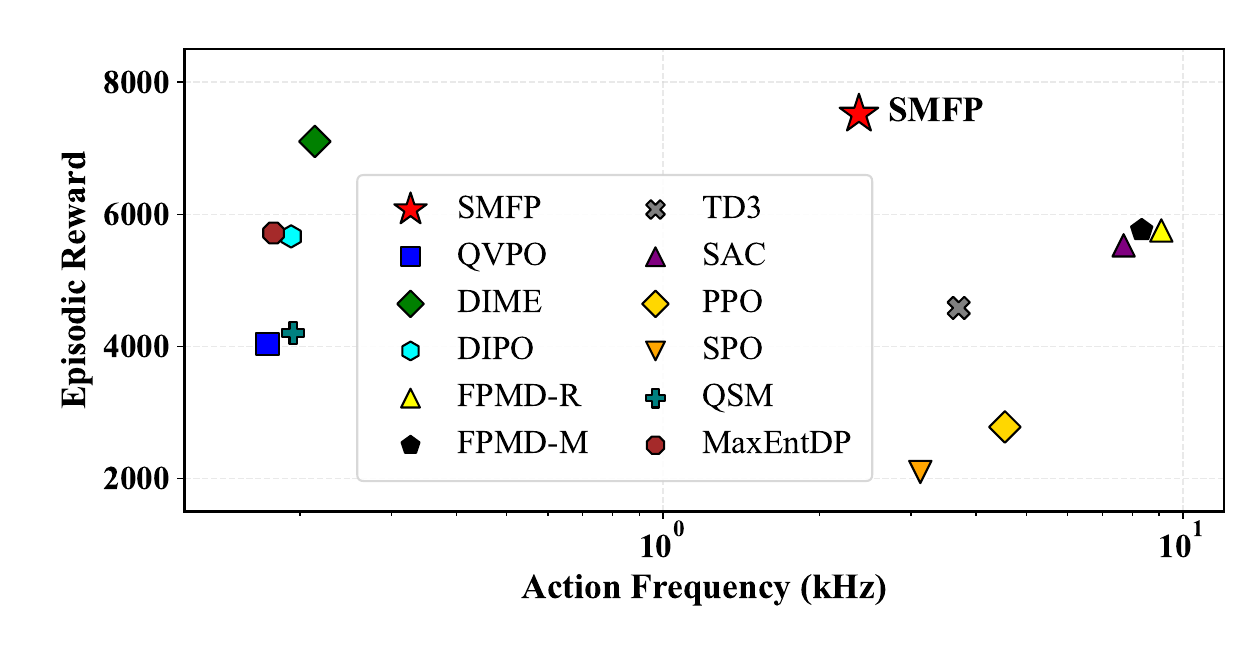}
    \caption{An illustration of SMFP Performance vs. Inference Speed. Evaluated on an NVIDIA RTX 5090 across Ant-v4, SMFP achieves strong performance with negligible inference latency.}
    \vspace{-12pt}
    \label{fig:introduction_teaser}
\end{wrapfigure}
unimodal action distributions. 
In contrast, diffusion and flow-based policies~\cite{hansen2023idql,team2024octo,ding2024diffusion,dime,MaxEntDP,zhang2025sac} offer substantially greater expressivity and can model complex multimodal behaviours. However, these benefits come at the cost of intractable likelihoods and entropy estimation, as well as high inference latency due to iterative sampling~\cite{ho2020denoising,rectifiedflow}. Recent one-step variants, such as MeanFlow policies~\cite{meanflowql}, alleviate sampling overhead but are still incompatible with explicit entropy control.

Along another axis of optimisation methodology, a similar tension arises between exploration-oriented entropy maximisation, as in SAC~\cite{haarnoja2018soft}, and stability-focused trust-region methods such as mirror descent~\cite{mirrorwang2, mirror1}. Entropy-regularised approaches maximise a soft value function augmented with an entropy term, promoting broad exploration and mitigating premature convergence. However, they typically rely on soft maximisation and lack explicit control over policy updates, which can lead to instability during rapid learning. In contrast, mirror-descent-based methods~\cite{beck2003mirror,raskutti2015information} impose KL constraints that ensure each update remains close to the previous policy, thereby ensuring stable and conservative improvement.
Fundamentally, these two types of optimisations can be interpreted as minimising distinct Kullback-Leibler divergences~\cite{mirror1}, capturing complementary aspects of policy optimisation: exploration vs stability.

\emph{Can SAC-style exploration be combined with the stability of mirror-descent updates?}
While these objectives are naturally complementary, their combination induces a multi-modal target distribution (as illustrated in Figure~\ref{fig:introduction_tension}), arising from the competing pulls of value maximisation and trust-region regularisation. However, standard Gaussian policies are inherently unimodal and therefore fail to represent such targets. This mismatch between optimisation objectives and policy representation raises a fundamental question: \emph{can expressive generative policies bridge this gap?}

\begin{wrapfigure}{r}{0.45\textwidth}
    \centering
    \vspace{-10pt}
    \includegraphics[width=1.05\linewidth]{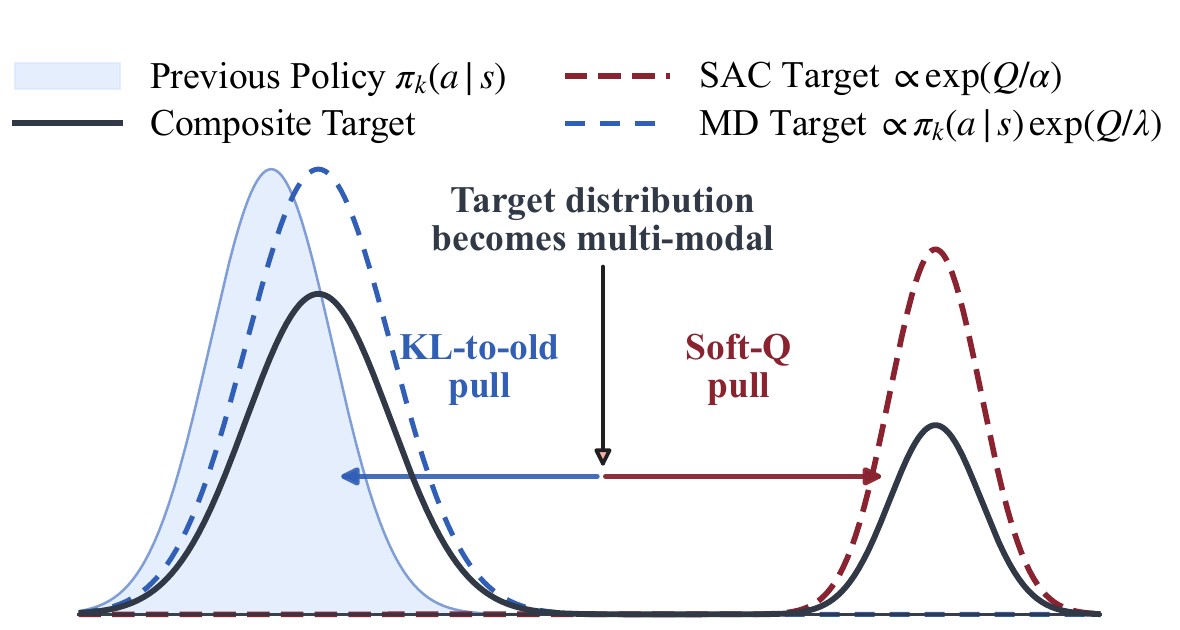}
    \caption{Illustration of the composite SAC-MD target and induced multi-modal structure.}
    \label{fig:introduction_tension}
\end{wrapfigure}

Entropy-regularised objectives, such as SAC, require a tractable estimate of policy entropy. However, for expressive generative policies, exact likelihoods and entropies are generally intractable and often rely on costly estimators or trajectory-level approximations~\cite{hutchinson1989stochastic,dime,zhang2025reinflow}. In contrast, diagonal Gaussian parameterisations~\cite{kingma2013auto,rezende2014stochastic,yu2021simple} enable efficient stochastic sampling and closed-form entropy control, suggesting a principled route to bridge generative expressivity with entropy-driven optimisation.
To this end, we introduce Stochastic MeanFlow Policies (SMFP), a class of one-step stochastic generative policies that combines MeanFlow-based noise-to-action mappings~\cite{meanflowql} with Gaussian reparameterisation. This stochastic reparameterisation enables the derivation of a tractable entropy surrogate, thereby eliminating the intractability of entropy control in vanilla MeanFlow~\cite{meanflow}. The derived entropy surrogate is compatible with off-policy mirror descent, yielding a unified objective that supports both effective exploration and stable policy updates. Worthy noting, SMFP makes entropy-regularised mirror-descent RL practically effective for the first time, while preserving the complementary advantages of expressive generative and efficient Gaussian policies.

Empirically, SMFP achieves strong performance across seven MuJoCo continuous-control benchmarks, matching or outperforming iterative diffusion and flow baselines while retaining efficient one-step inference. Notably, while prior work suggests that combining SAC and mirror descent is ineffective under Gaussian policy parameterisations~\cite{neumann2025investigating}, our results demonstrate that this combination becomes beneficial when supported by expressive generative policies.

\section{Related Works}
\textbf{Maximum Entropy Reinforcement Learning}.
Maximum entropy (MaxEnt) RL, standardised by algorithms like SAC~\cite{haarnoja2018soft}, balances exploration and exploitation by augmenting the reward objective with entropy regularisation. While recent innovations such as CrossQ~\citep{crossq} and BRO~\cite{nauman2024bro} have significantly improved sample efficiency via batch renormalisation~\cite{ioffe2017batchrenorm} and aggressive updates, they predominantly rely on Gaussian policies that limit the modelling of complex action distributions~\cite{chi2023diffusion,pan2025much}. To address this expressivity gap within the MaxEnt framework, the field has evolved from computationally intensive particle-based sampling methods, ranging from SVGD~\cite{haarnoja2017reinforcement} to the recent LSAC~\cite{ishfaq2025lsac}, toward expressive generative policies. This emerging landscape includes diffusion-based algorithms like DIME~\cite{dime}, DACER~\cite{wang2024diffusion}, and MaxEntDP~\cite{MaxEntDP} that approximate score functions or entropy surrogates, alongside flow-based methods such as ReinFlow~\cite{zhang2025reinflow} and SAC Flow~\cite{zhang2025sac} that leverage velocity reparameterisation or noise injection for tractable entropy maximisation. Recent FLAC~\cite{lv2026flac} further avoids explicit action-density estimation via
kinetic-energy-regularised bridge matching, but uses path-space regularisation rather than explicit
action-entropy control.

\textbf{Mirror Descent Policy Optimisation}.
A prominent lineage of algorithms, notably TRPO~\cite{schulman2015trust} and PPO~\cite{schulman2017PPO}, guarantees stable improvement by enforcing trust-region constraints via Kullback-Leibler (KL) divergence. This paradigm is rigorously unified under the framework of Mirror Descent Policy Optimisation (MDPO)~\cite{mirror1, neu2017unified, vieillard2020leverage}, which recasts policy learning as iterative dual averaging regularised by a Bregman divergence. While recent research aims to instantiate this framework using highly expressive generative models, such as Diffusion~\cite{ho2020denoising, song2020score} and Flow Matching~\cite{rectifiedflow}, the direct application of the theoretical MDPO update is impeded by the intractability of evaluating exact likelihoods for these priors. Consequently, prior works resort to tractable approximations to bypass exact density evaluation: DIPO~\cite{yang2023dipo} derives a policy improvement operator via diffusion gradients to implicitly regularize the update, whereas QVPO~\cite{ding2024diffusion} and ReinFlow~\cite{lv2025flow} leverage variational projection to align the generative policy with the optimal energy-based distribution.

\textbf{One-Step Generative Policy}.
Recent advancements have sought to bypass the computational burden of iterative sampling by adopting one-step generative policies, including distillation-based methods~\cite{fql}, Consistency Policies~\cite{ding2023consistency}, Shortcut Models~\cite{fql_shortcut}, and MeanFlow formulations~\cite{chen2025one,meanflowql}. Although these approaches excel in specific regimes, their application to online RL is hindered by a fundamental limitation: they typically converge to deterministic mappings or rely on implicit regularisation. Consequently, they lack a tractable mechanism for explicit entropy maximisation, which is indispensable for sustaining active exploration. 

\textbf{More Discussion}. Unlike prior one-step flow or MeanFlow policies that primarily focus on deterministic fast action generation or mirror-descent projection~\cite{chen2025one,meanflowql,fql_shortcut}, SMFP explicitly constructs a stochastic one-step generative policy whose conditional variance admits a tractable entropy lower bound. This makes the policy compatible with SAC-style entropy regularisation while retaining the stability of off-policy mirror descent. Thus, the key contribution is not merely replacing diffusion sampling with MeanFlow, but enabling a tractable objective in which entropy-driven exploration, value-weighted mirror descent, and one-step generative inference are jointly optimised.

\section{Preliminaries}
\label{sec:preliminaries}
Online reinforcement learning is modelled as an infinite-horizon Markov Decision Process (MDP) $\mathcal{M}=(\mathcal{S},\mathcal{A},p,r,\gamma)$, where $\mathcal{S}$ is the state space, $\mathcal{A}\subseteq\mathbb{R}^d$ is the continuous action space, $p$ is the transition density, $r$ is a bounded reward function, and $\gamma\in[0,1)$ is the discount factor. We use $h$ to index the discrete RL time step, so the dynamics and rewards are given by $p(s_{h+1}|s_h,a_h)$ and $r(s_h,a_h)$, respectively. In the online off-policy setting, the agent collects experience through environment interaction and stores transitions in a replay buffer $\mathcal{D}$ for off-policy updates. We consider two policy optimisation frameworks: Maximum Entropy RL and Policy Mirror Descent. For notation, the policy and Q-network are parameterised by $\theta$ and $\phi$, respectively; $\alpha$ denotes the SAC entropy coefficient, and $\lambda$ denotes the Policy Mirror Descent regularisation coefficient.

\subsection{Maximum Entropy Reinforcement Learning}
\label{sec:sac_prelim}
Maximum Entropy RL augments the return with an entropy bonus to encourage
exploration. SAC~\cite{haarnoja2018soft} instantiates this objective in an
off-policy actor-critic framework by maximising
$\E_{\pi}\!\left[\sum_{h=0}^{\infty}\gamma^h
\bigl(r(s_h,a_h)+\alpha\mathcal{H}(\pi(\cdot|s_h))\bigr)\right]$,
where $\alpha$ controls the strength of entropy regularisation.

\textbf{Soft Policy Evaluation.}
The critic estimates the soft action-value function $Q_\phi(s,a)$ through the
soft Bellman operator
\begin{equation}
    \mathcal{T}^\pi Q(s_h,a_h)
    \triangleq
    r(s_h,a_h)
    +
    \gamma
    \mathbb{E}_{s_{h+1}\sim p(\cdot|s_h,a_h)}
    \left[
        V^\pi(s_{h+1})
    \right],
    \label{eq:pre_critic_loss}
\end{equation}
where
$V^\pi(s)=\mathbb{E}_{a\sim\pi(\cdot|s)}
\left[Q_\phi(s,a)-\alpha\log\pi(a|s)\right]$.
The critic parameters are updated by minimising the squared Bellman residual.

\textbf{Soft Policy Improvement.}
The policy improvement step can be written as projecting the parametric policy
onto the Boltzmann distribution induced by the soft Q-function:
\begin{equation}
    \mathcal{L}_\pi(\theta)
    =
    \mathbb{E}_{s_h \sim \mathcal{D}}
    \left[
        D_{\text{KL}}
        \left(
            \pi_\theta(\cdot|s_h)
            \bigg\|
            \frac{
                \exp\!\left(\frac{1}{\alpha}Q_\phi(s_h,\cdot)\right)
            }{
                Z(s_h)
            }
        \right)
    \right].
    \label{eq:sac_kl}
\end{equation}
Using the reparameterisation trick, this yields the standard tractable SAC actor
loss
\begin{equation}
    \mathcal{L}_\pi(\theta)
    =
    \mathbb{E}_{\substack{s_h\sim\mathcal{D}\\ a_h\sim\pi_\theta}}
    \left[
        \alpha \log \pi_\theta(a_h|s_h)
        -
        Q_\phi(s_h,a_h)
    \right],
    \label{eq:sac_loss}
\end{equation}
where $a_h$ is sampled from the reparameterised policy. In standard SAC,
$\pi_\theta$ is typically parameterised as a squashed Gaussian, which enables
efficient sampling and likelihood evaluation but restricts the policy to a
largely unimodal action distribution.

\subsection{Policy Mirror Descent}
\label{sec:mdpo_prelim}

Mirror Descent (MD) provides a general framework for regularised policy optimisation in reinforcement learning~\cite{schulman2015trust, mirror1}. Rather than prescribing a specific objective, MD constrains each policy update to remain close to the previous policy while optimising a chosen policy objective. In this work, we consider a value-based instantiation, where the policy is improved with respect to the action-value estimate. Formally, given the current policy $\pi_k$, we minimise
\begin{equation}
    \mathcal{L}_{\text{MD}}(\pi) = \E_{s \sim \mathcal{D}, a \sim \pi} \left[ -Q^{\pi_k}(s, a) \right] + \lambda D_{\text{KL}}(\pi(\cdot|s) \,\|\, \pi_k(\cdot|s)).
    \label{eq:pmd_objective}
\end{equation}
For this value-based objective, the update admits an analytical solution, where the optimal policy is proportional to an energy-based reweighting of the prior policy $\pi_k$:
\begin{equation}
    \pi_{k+1}(a|s) = \frac{\pi_k(a|s) \exp\left( \frac{1}{\lambda} Q^{\pi_k}(s, a) \right)}{Z(s)},
    \label{eq:pmd_closed_form}
\end{equation}
where $Z(s) = \int \pi_k(a'|s) \exp( \frac{1}{\lambda} Q^{\pi_k}(s, a') ) \, da'$ is the partition function. This multiplicative update shifts probability mass towards high-value actions while keeping the policy close to the previous iterate $\pi_k$, thereby improving update stability.

\subsection{Flow Matching and MeanFlow Variants} 
\label{sec:flow-matching}
Flow Matching learns to transform noise into data via time-dependent velocities, whereas MeanFlow enables one-step sampling by modelling the time-averaged velocity.

\textbf{Flow Matching.} Flow matching~\citep{lipman2023flow,rectifiedflow} is a generative modelling framework that learns a continuous time-dependent velocity field bridging a simple base distribution and a complex data distribution. In our context, let $e \sim p_{\text{prior}}$ be sampled from a simple prior (e.g. a standard Gaussian) and $a \sim p_{\text      {data}}$ from the empirical action data distribution. We consider an interpolation $a_t = (1-t)\,a + t\,e$ for $t \in [0,1]$. Flow matching trains a velocity field $v_\theta(a_t,t)$ to predict the ground-truth displacement between $a$ and $e$. Specifically, one trains $v_\theta$ by minimizing:
\begin{equation}
\min_{\theta}\; \underset{\substack{a \sim p_{\text{data}},\\ \,e \sim p_{\text{prior}},\,t\sim \mathcal{U}(0,1)}}{\mathbb{E}}\big[\|v_\theta(a_t,t) - (e - a)\|_2^2\big],
\label{eq:flow_matching}
\footnote{Note that \textit{state} $s$ is an important input of our model. But for simplicity, and to save space in our equations, we will continue to denote the pair $(s,a_t)$ simply by $a_t$}
\end{equation}
where $a_t = (1-t)a + t e$ as above. By learning to predict $e - a$, the model captures the time-dependent instantaneous velocity needed to transform $a$ into $e$ (or vice versa). 

\textbf{MeanFlow and Variants.} MeanFlow~\citep{meanflow} builds on flow matching by introducing an \emph{average velocity} field $u(a_t, b, t)$. For any $0 \leq b < t \leq 1$, the average velocity is defined as the time-average of the instantaneous velocity $v$ over the interval $[b,t]$:
\begin{equation}
u(a_t, b, t) \;\coloneqq\; \frac{1}{\,t - b\,} \int_{b}^{t} v(a_\tau,\tau)\,d\tau.
\label{eq:meanflow_identity}
\end{equation}
Differentiating both sides with respect to $t$ (treating $b$ as constant) yields the \emph{MeanFlow Identity} ~\citep{meanflow}:
\begin{equation}
u(a_t,b,t) \;=\; v(a_t,t)\;-\;(t-b)\,\frac{d}{d t}u(a_t,b,t).
\label{eq:meanflow_diff}
\end{equation}

MeanFlow establishes a principled relationship between the average velocity $u$ and the instantaneous velocity $v$. Crucially, it enables \textit{one-step} generation: by choosing $b=0$ and $t=1$ in \eqref{eq:meanflow_identity}, one obtains the one-step sampler $\hat{a}=e-u_\theta(e,b=0,t=1)$. This formula generates $\hat{a}$ through a single velocity estimate from the learned $u_\theta$, without time integration.

However, the direct application of this canonical form within Q-learning frameworks frequently leads to action boundary violations and training instability.
To mitigate these issues, a \emph{revised residual MeanFlow formulation} is adopted~\cite{meanflowql}, parameterised as 
\begin{equation}
    f_\theta(a_t, b, t) = a_t - u_\theta(a_t, b, t).
    \label{equ:meanflowql_form}
\end{equation}
Unlike standard velocity-based approaches, this formulation effectively learns a \emph{direct generative mapping} from noise to action like ~\cite{li2025back}.
During inference, this allows the policy to output the action directly from input noise in a single forward pass, bypassing the need for explicit velocity subtraction or iterative integration while maintaining stable end-to-end optimisation with Q-functions.

\section{Stochastic MeanFlow Policy Framework}
\begin{figure}[t]
    \vspace{-0.5cm}
  \centering
  \includegraphics[trim=0 110 60 0, clip, width=0.95\textwidth]{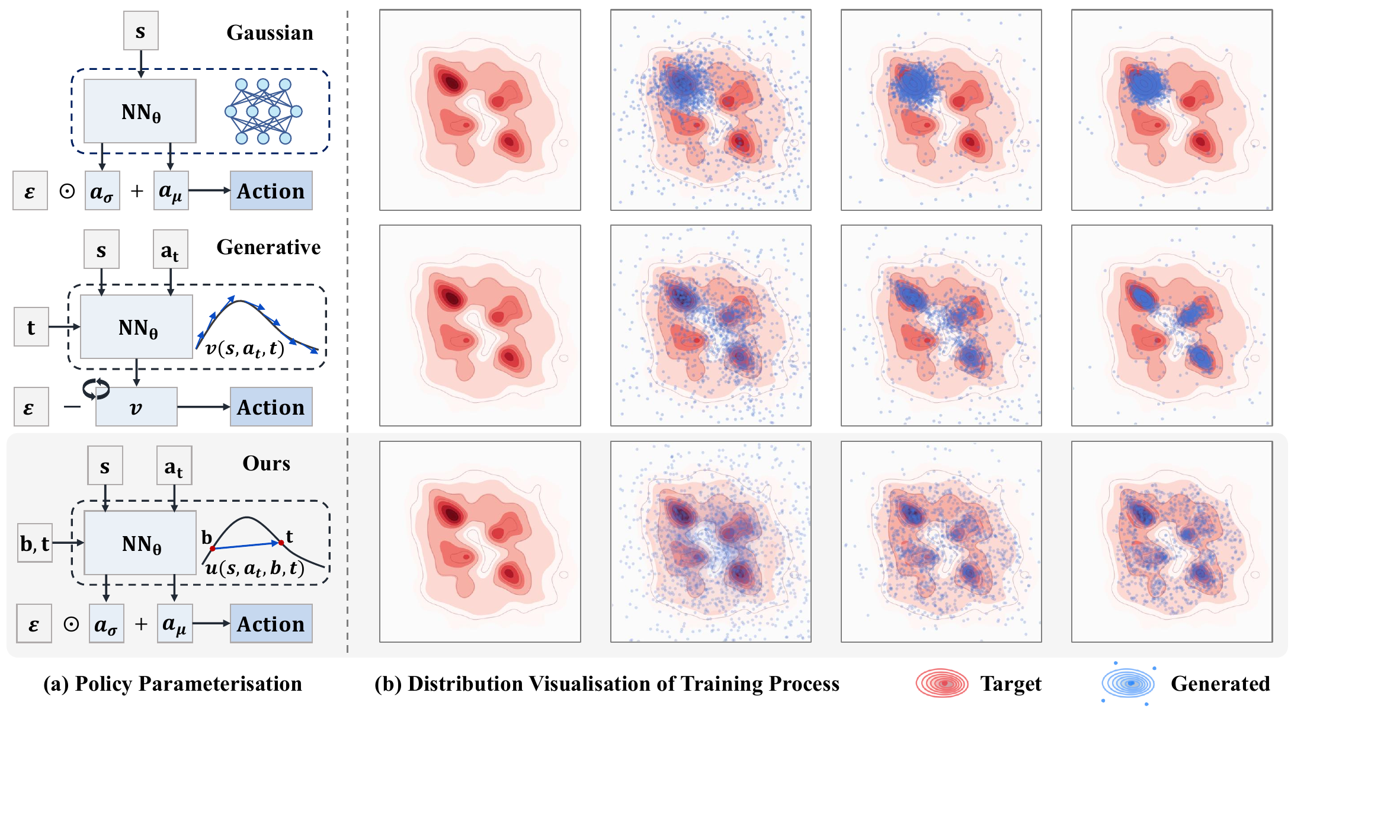}
  \vspace{-0.2cm}
\caption{
Comparison of policy parameterisations and action-distribution evolution on Push-T.
(a) Gaussian, conventional flow-based, and SMFP policies. Unlike flow policies that estimate the instantaneous velocity $v(s,a_t,t)$, SMFP estimates the average velocity over $[b,t]$, enabling one-step action generation with a tractable entropy lower bound.
(b) Training evolution of generated action distributions. Red contours indicate target modes and blue samples indicate generated actions. Gaussian policies tend to cover a single dominant mode, while conventional flow matching captures multimodality but loses exploration as samples concentrate. SMFP covers all target modes while preserving broad stochastic exploration.
}
\label{fig:intro_network}
\vspace{-0.5cm}
\end{figure}

As shown in Fig.\ref{fig:intro_network}a, we introduce \textit{Stochastic MeanFlow Policies} (SMFP), a one-step generative policy framework for online off-policy reinforcement learning. 
SMFP reformulates the MeanFlow identity to obtain a direct, differentiable noise-to-action mapping, thereby combining flow-based expressivity with one-step inference. 
Its stochastic reparameterisation further provides a tractable entropy surrogate, allowing SAC-style exploration to be coupled with off-policy mirror-descent optimisation for stable and efficient policy improvement.

\subsection{Stochastic MeanFlow Policy Architecture}
\label{sec:smf_architecture}
To overcome the inefficiency of iterative ODE solvers, we adopt the \textit{Residual Reformulation of MeanFlow Policy} \cite{meanflowql} as backbone. By reformulating the average velocity estimation into a direct noise-to-action mapping, this approach effectively implements data prediction ($x$-prediction), offering superior stability on complex manifolds~\cite{li2025back}. Crucially, it establishes a fully differentiable noise-to-action mapping, enabling one-step inference and direct Q-gradient propagation without the overhead of multi-step ODE differentiation.  

\textbf{Stochastic Reparameterisation.}
While deterministic mappings are efficient, maximum entropy reinforcement learning necessitates a stochastic policy to support exploration and entropy regularisation. As illustrated in Fig.~\ref{fig:intro_network}a, we extend the deterministic MeanFlow formulation by incorporating a Gaussian reparameterisation mechanism into Eq.~\ref{equ:meanflowql_form}:
\begin{equation}
g_\theta(a_t, b, t) = \underbrace{a_t - u_\theta(a_t, b, t)}_{Action_\mu} + \underbrace{\sigma_\theta(a_t, b, t)}_{Action_\sigma} \odot \epsilon,
\label{eq:reparam_action}
\end{equation}
where $g_\theta$ denotes the generative policy network, the term $a_t$ represents the interpolation defined as $a_t = (1-t)a + t e$, where both $e$ and $\epsilon$ are independent standard Gaussian noise variables. 
The two noise variables play different roles: $e$ determines the latent generative branch of the one-step MeanFlow mapping, while $\epsilon$ parameterises the conditional stochasticity around that branch and is the source of the tractable entropy surrogate.
%
%
%
In this formulation, $u_\theta$ and $\sigma_\theta$ denote the neural network modules predicting the mean and standard deviation, respectively, while $\odot$ denotes element-wise multiplication. 
The one-step generative policy is then obtained by setting $b=0$, $t=1$, and $a_1=e$, yielding the predicted action $\hat{a} = g_\theta(a_t = e, b=0, t=1)$.

\textbf{Optimisation Objective.}
To synergise the entropy-driven exploration of the SAC framework with the optimisation stability inherent in Mirror Descent, we augment the standard SAC objective (Eq.~\ref{eq:sac_loss}) with a KL-divergence regulariser anchored to the previous policy $\pi_{\theta_{\mathrm{old}}}$ (Eq.~\ref{eq:pmd_objective}). This integration yields the following KL-regularised surrogate loss at iteration $k$:
\begin{equation}
\begin{split}
    \mathcal{L}(\theta) = \E_{\substack{s \sim \mathcal{D}}} \Big[ & -Q(s, g_\theta(s)) + \alpha \log \pi_\theta(\cdot|s) + \lambda D_{\text{KL}}(\pi_\theta(\cdot|s) \| \pi_{\theta_{\text{old}}}(\cdot|s)) \Big],
\end{split}
\label{eq:overall}
\end{equation}
where $\alpha$ weights entropy regularisation, and $\lambda$ parameterises the inverse mirror descent step size. Here, $g_\theta$ serves as the reparameterised sampler for action generation, while $\pi_\theta$ denotes the induced density for computing the entropy and KL terms. 
This formulation presents two computational challenges specific to implicit generative policies: (1) calculating the log-likelihood $\log \pi_\theta(a|s)$ for a flow-based distribution without explicit density, and (2) computing the KL divergence between two stochastic MeanFlow policies. To address these, we derive a tractable entropy surrogate in Sec.~\ref{sec:sac_method} and formulate the mirror descent training objective in Sec.~\ref{sec:md_method}.

\subsection{Tractable Entropy Regularisation}
\label{sec:sac_method}
For generative policies, SAC-style entropy regularisation is challenging
because the marginal log-likelihood and entropy are not available in closed form.
Existing flow- or diffusion-based estimators typically rely on density tracking
or trajectory-level approximations, which introduce substantial overhead for
online RL~\cite{hutchinson1989stochastic,grathwohlffjord,dime,zhang2025reinflow}.
For SMFP, the SAC-style entropy term in Eq.~\eqref{eq:overall} can be handled
directly through its stochastic reparameterisation. Conditioned on the latent
noise $e\sim\mathcal{N}(0,I)$, SMFP defines a diagonal Gaussian conditional
policy, while the marginal policy
$\pi_\theta(a|s)=\int \pi_\theta(a|s,e)p(e)\,\mathrm{d}e$
remains implicit. Instead of evaluating this marginal density, we maximise a
tractable conditional-entropy lower bound. As derived in
Proposition~\ref{prop:entropy_lb} and Appendix~\ref{app:entropy_proofs}, after
dropping the additive Gaussian entropy constant, this bound reduces to
\begin{equation}
    \widetilde{\mathcal{H}}(\pi_\theta \mid e)
    =
    \mathbb{E}_{e}
    \left[
        \sum_{i=1}^{d}
        \log \sigma_\theta^{(i)}(a_t,b,t)
    \right].
    \label{eq:entropy_calc}
\end{equation}
Thus, the expected log-standard deviation provides an efficient entropy surrogate
for SMFP. Proposition~\ref{prop:kl_limit} further shows that maximising this
surrogate is asymptotically equivalent to minimising the KL divergence between
the policy and a diffuse prior. In practice, rather than maximising this quantity
without constraint, we use it to impose a minimum noise-scale requirement. This
yields a hinge-style entropy regulariser that activates only when the normalised
log-standard-deviation surrogate falls below a prescribed threshold:
\begin{equation}
    \mathcal{L}_{\mathrm{Ent}}(\theta)
    =
    \mathbb{E}
    \left[
        \operatorname{ReLU}
        \left(
            \kappa_{\sigma}
            -
            \frac{1}{d}
            \sum_{i=1}^{d}
            \log \sigma_\theta^{(i)}(a_t,b,t)
        \right)
    \right].
    \label{eq:entropy_term}
\end{equation}
Here, $d$ denotes the action dimension, and $\kappa_{\sigma}$ specifies the minimum normalised log-scale. The hinge penalty is active only below this floor, preventing variance collapse without driving unbounded entropy growth; as shown in Fig.~\ref{fig:intro_network}b, it broadens stochastic exploration while preserving SMFP's ability to cover the target modes.

\subsection{Optimisation via Off-Policy Mirror Descent}
\label{sec:md_method}
To resolve the KL intractability in Eq.~\eqref{eq:overall}, we leverage the analytical structure of Off-Policy Mirror Descent. Formally, minimising the KL-regularised objective yields the closed-form energy-based target distribution $\pi^*(\cdot|s) \propto \pi_{\theta_{\mathrm{old}}}(\cdot|s) \exp(Q(s, \cdot)/\lambda)$, as detailed in Eq.~\eqref{eq:pmd_closed_form}.
This formulation effectively converts the complex RL optimisation into a regression problem: projecting our implicit generative policy $\pi_\theta$ onto this optimal target $\pi^*$.

Since $\pi^*$ is a non-parametric and unnormalised target, we do not sample from it directly. Instead, following weighted policy projection~\citep{abdolmaleki2018maximum, chen2025one}, we use samples from the previous policy as proposals and weight them according to their relative target density.
\begin{equation}
    \omega(s,a)
    \propto
    \exp\!\left(\frac{Q(s,a)}{\lambda}\right)
    \propto
    \exp\!\left(\frac{A^{\pi_{\theta_{\mathrm{old}}}}(s,a)}{\lambda}\right),
    \label{eq:adv_weight}
\end{equation}
where the second term follows because any state-value baseline is
independent of the action.  

For SMFP, however, the marginal likelihood required by this projection is intractable. We therefore retain the weighting induced by the mirror-descent target, but replace the marginal log-likelihood fitting term with the tractable stochastic MeanFlow regression surrogate:
\begin{equation}
    \mathcal{L}_{\mathrm{SMF}}(\theta; a, s)
    =
    \mathcal{L}_{\delta}
    \left(
    g_\theta(a_t,b,t)
    -
    \operatorname{sg}(g_{\mathrm{tgt}})
    \right),
    \label{eq:smf_loss}
\end{equation}
where $\mathcal{L}_{\delta}$ is the Huber loss, and
$\operatorname{sg}(\cdot)$ denotes the stop-gradient operator. Following the
MeanFlow identity, the target is
\begin{equation}
\begin{split}
    g_{\mathrm{tgt}}
    \triangleq\,
    a_t
    + (t-b-1)v(a_t,t)
    + \sigma_\theta \odot \epsilon
    - (t-b)
    \left[
        \texttt{jvp}(g_\theta, \cdot)
        -
        \texttt{jvp}(\sigma_\theta, \cdot)\odot\epsilon
    \right],
\end{split}
\label{eq:g_tgt_def}
\end{equation}
with the derivation provided in Appendix~\ref{app:derivation_smf}.
In practice, the exponentiated weight in Eq.~\eqref{eq:adv_weight} can be unstable under noisy online value estimates.
We therefore use the robust approximation~\citep{ding2024diffusion}
\begin{equation}
    \omega_{\mathrm{adv}}(s,a)
    =
    \max\bigl(0, Q(s,a)-V(s)\bigr),
    \label{equ:adv}
\end{equation}
whose derivation is given in Appendix~\ref{app:addvantage}. This also avoids a
separate value network, as $V(s)$ is estimated by averaging the Q-values of
$N_{\mathrm{adv}}$ actions sampled from the behaviour policy. The resulting
mirror-descent regression objective is
\begin{equation}
    \mathcal{L}_{\mathrm{MD}}(\theta)
    =
    \mathbb{E}_{a\sim \pi_{\theta_{\mathrm{old}}}}
    \left[
        \omega_{\mathrm{adv}}(s,a)
        \mathcal{L}_{\mathrm{SMF}}(\theta; a, s)
    \right].
    \label{eq:final_loss_md}
\end{equation}

Finally, the SMFP actor objective is obtained as a tractable surrogate of Eq.~\eqref{eq:overall}: the intractable SAC entropy term is replaced by the conditional entropy-floor loss in Eq.~\eqref{eq:entropy_term}, while the KL projection is implemented by the advantage-weighted MeanFlow regression objective in Eq.~\eqref{eq:final_loss_md}:
\begin{equation}
    \mathcal{L}(\theta)
    =
    \mathbb{E}_{s\sim\mathcal{D}}
    \left[
        -Q(s,g_\theta(\cdot))
        + \alpha \mathcal{L}_{\mathrm{Ent}}(\theta)
        + \lambda \mathcal{L}_{\mathrm{MD}}(\theta)
    \right].
    \label{eq:overall_loss_tractable}
\end{equation}

\subsection{Implementation Details}
\textbf{Value-Guided Action Selection.}
To reduce the variance of generative policies, we use value-guided sampling~\citep{ding2024diffusion, meanflowql}. 
For each state, we draw $K$ action candidates, each generated with a single network evaluation, and process them in parallel using JAX \texttt{vmap}. 
The action is then selected as:
\begin{equation}
a_\theta^K(s) \triangleq 
\underset{a \in \{a_1, \dots, a_K \sim g_\theta(s,e,b=0,t=1)\}}{\operatorname{argmax}} Q(s,a).
\label{eq:23}
\end{equation}
This acts as a lightweight rejection filter for low-value samples, stabilising learning with minimal inference overhead due to parallel processing.
We use separate candidate counts for behaviour inference and target estimation, denoted by $K_b$ and $K_t$, and set $K_t=0.5K_b$ following QVPO~\citep{ding2024diffusion}.

\textbf{Critic Implementation.}
We train the critic using the standard target-Q architecture. Given critic parameters $\phi$ and target parameters $\bar{\phi}$, the critic is updated by minimising the Bellman error:
\begin{equation}
\label{eq:critic_loss}
    \mathcal{L}_{\text{Critic}}(\phi)
    =
    \mathbb{E}_{(s,a,r,s') \sim \mathcal{D}}
    \left[
    \left(
    Q_\phi(s,a) - \bar{Q}
    \right)^2
    \right],
\end{equation}
where the target is computed using the target critic with stop-gradient:
\begin{equation}
\label{eq:critic_target}
    \bar{Q}
    =
    r
    +
    \gamma
    \left[
    Q_{\bar{\phi}}(s',a')
    +
    \alpha
    \sum_{i=1}^{d}
    \log \sigma_\theta^{(i)}(s',e)
    \right]_{\mathrm{sg}} .
\end{equation}
Here, $a'$ is generated by the current one-step MeanFlow policy at $s'$, and the entropy term follows the lower bound in Proposition~\ref{prop:entropy_lb}. The target parameters $\bar{\phi}$ are updated by an exponential moving average. The complete training procedure of SMFP is summarised in Algorithm~\ref{alg:SMFP} in Appendix~\ref{app:algorithm}.

\section{Experiments}

\begin{table*}[t]
\centering
\vspace{-0.2cm}
\caption{Comparison of SMFP and 15 other online RL algorithms in evaluation results under 30 random seeds.}
\label{tab:main_results}

\definecolor{rank1}{RGB}{179, 217, 255}
\definecolor{rank2}{RGB}{217, 236, 255}
\definecolor{rank3}{RGB}{240, 248, 255}

\newcommand{\first}[1]{\cellcolor{rank1}\textbf{#1}}
\newcommand{\second}[1]{\cellcolor{rank2}#1}
\newcommand{\third}[1]{\cellcolor{rank3}#1}

\scalebox{0.58}{
\begin{threeparttable}
\begin{tabular}{llcccccccc}
\toprule
\multicolumn{2}{c}{{Method}} & {NFE} & \textsc{Hopper} & \textsc{Walker2d} & \textsc{Ant} & \textsc{HalfCheetah} & \textsc{Humanoid} & \textsc{HumanoidStandup} & \textsc{Swimmer} \\ 
\midrule
\multirow{4}{*}{\begin{tabular}[c]{@{}l@{}}\textbf{Gaussian}\\ \textbf{Policy RL}\end{tabular}} 
& PPO & 1 & 3154.3 {\scriptsize $\pm$ 426.2} & 3751.5 {\scriptsize $\pm$ 609.1} & 2781.9 {\scriptsize $\pm$ 74.1} & 4773.5 {\scriptsize $\pm$ 53.4} & 713.7 {\scriptsize $\pm$ 85.9} & 56368.4 {\scriptsize $\pm$ 563.2} & 86.2 {\scriptsize $\pm$ 12.3} \\
& SPO & 1 & 2212.8 {\scriptsize $\pm$ 988.4} & 3321.8 {\scriptsize $\pm$ 1328.9} & 2100.2 {\scriptsize $\pm$ 302.4} & 4008.2 {\scriptsize $\pm$ 246.8} & 797.4 {\scriptsize $\pm$ 262.1} & 62875.2 {\scriptsize $\pm$ 3521.8} & 58.4 {\scriptsize $\pm$ 14.2} \\
& TD3 & 1 & 3267.5 {\scriptsize $\pm$ 8.5} & 3513.9 {\scriptsize $\pm$ 40.7} & 4583.8 {\scriptsize $\pm$ 69.5} & 10388.6 {\scriptsize $\pm$ 80.4} & 5353.5 {\scriptsize $\pm$ 53.7} & 105243.8 {\scriptsize $\pm$ 637.1} & 80.4 {\scriptsize $\pm$ 11.2} \\
& SAC & 1 & 2996.6 {\scriptsize $\pm$ 111.9} & 4988.1 {\scriptsize $\pm$ 80.0} & 5530.6 {\scriptsize $\pm$ 1000.3} & 10616.9 {\scriptsize $\pm$ 72.8} & 5159.7 {\scriptsize $\pm$ 475.3} & 116073.6 {\scriptsize $\pm$ 1254.4} & 69.2 {\scriptsize $\pm$ 10.1} \\ 
\midrule
\multirow{6}{*}{\begin{tabular}[c]{@{}l@{}}\textbf{Diffusion}\\ \textbf{Policy RL}\end{tabular}} 
& DIPO & 20 & 1295.4 {\scriptsize $\pm$ 7.0} & 2181.7 {\scriptsize $\pm$ 1625.7} & 5665.9 {\scriptsize $\pm$ 14.7} & 9590.5 {\scriptsize $\pm$ 67.5} & 4945.5 {\scriptsize $\pm$ 898.6} & 105163 {\scriptsize $\pm$ 2162.3} & 52.2 {\scriptsize $\pm$ 7.3} \\
& QSM & 20 & 2154.7 {\scriptsize $\pm$ 998.2} & 3613.4 {\scriptsize $\pm$ 1443.5} & 4206.4 {\scriptsize $\pm$ 143.5} & 3888.2 {\scriptsize $\pm$ 632.6} & 4793.1 {\scriptsize $\pm$ 229.5} & 97120.1 {\scriptsize $\pm$ 927.1} & 61.6 {\scriptsize $\pm$ 7.3} \\
& { QVPO} & 20 & 3364.4 {\scriptsize $\pm$ 325.0} & 3826.3 {\scriptsize $\pm$ 1909.4} & 4040.1 {\scriptsize $\pm$ 1085.6} & 8759.4 {\scriptsize $\pm$ 696.3} & 5176.6 {\scriptsize $\pm$ 410.4} & 137754.7 {\scriptsize $\pm$ 80529.5} & \third{112.4 {\scriptsize $\pm$ 15.1}} \\
& DIME\tnote{*} & 16 & 2567.6 {\scriptsize $\pm$ 46.3} & \second{6447.8 {\scriptsize $\pm$ 104.6}} & \second{7103.6 {\scriptsize $\pm$ 85.2}} & \third{13464.1 {\scriptsize $\pm$ 139.2}} & \first{11814.1 {\scriptsize $\pm$ 164.1}} & 111336.9 {\scriptsize $\pm$ 744.5} & \second{118.8 {\scriptsize $\pm$ 4.5}} \\
& MaxEntDP & 20 & \second{3520.9 {\scriptsize $\pm$ 67.3}} & 5142.2 {\scriptsize $\pm$ 98.2} & 5717.9 {\scriptsize $\pm$ 134.5} & 11281.5 {\scriptsize $\pm$ 45.1} & 5749.6 {\scriptsize $\pm$ 105.6} & 158656.4 {\scriptsize $\pm$ 1432.8} & 90.3 {\scriptsize $\pm$ 22.7} \\
& DPMD & 20 & 3317.9 {\scriptsize $\pm$ 55.2} & 5023.2 {\scriptsize $\pm$ 96.4} & \third{5931.8 {\scriptsize $\pm$ 104.6}} & 12931.4 {\scriptsize $\pm$ 441.1} & 6835.4 {\scriptsize $\pm$ 532.8} & 186483.3 {\scriptsize $\pm$ 1613.1} & 80.2 {\scriptsize $\pm$ 53.3} \\
\midrule
\multirow{5}{*}{\begin{tabular}[c]{@{}l@{}}\textbf{Flow}\\ \textbf{Policy RL}\end{tabular}} 
& SAC Flow-T & 4 & 3457.1 {\scriptsize $\pm$ 54.1} & \third{5676.5 {\scriptsize $\pm$ 110.2}} & 5787.6 {\scriptsize $\pm$ 165.4} & \second{14897.8 {\scriptsize $\pm$ 132.7}} & \third{9821.1 {\scriptsize $\pm$ 241.5}} & \third{306937.8 {\scriptsize $\pm$ 1876.3}} & 101.5 {\scriptsize $\pm$ 11.2} \\
& SAC Flow-G & 4 & 3440.3 {\scriptsize $\pm$ 67.5} & 5094.1{\scriptsize $\pm$ 90.7} & 5847.1 {\scriptsize $\pm$ 151.3} & 12432.8 {\scriptsize $\pm$ 142.5} & 8163.3 {\scriptsize $\pm$ 201.7} & \second{368505.1 {\scriptsize $\pm$ 2248.2}} & 98.2 {\scriptsize $\pm$ 12.5} \\
& FlowRL & 1 & 2784.6 {\scriptsize $\pm$ 215.4} & 4564.7 {\scriptsize $\pm$ 312.1} & 5513.3 {\scriptsize $\pm$ 287.6} & 8963.4 {\scriptsize $\pm$ 198.5} & 5560.7 {\scriptsize $\pm$ 145.2} & 151870.6 {\scriptsize $\pm$ 1234.9} & 48.7 {\scriptsize $\pm$ 9.6} \\
& FPMD-R & 1 & \third{3491.4 {\scriptsize $\pm$ 61.2}} & 4134.5 {\scriptsize $\pm$ 596.5} & 5756.2 {\scriptsize $\pm$ 82.5} & 11043.6 {\scriptsize $\pm$ 612.2} & 6752.6 {\scriptsize $\pm$ 326.7} & 205336.8 {\scriptsize $\pm$ 1652.4} & 62.2 {\scriptsize $\pm$ 10.3} \\
& FPMD-M & 1 & 3063.4 {\scriptsize $\pm$ 693.2} & 4723.6 {\scriptsize $\pm$ 312.4} & 5762.4 {\scriptsize $\pm$ 152.5} & 9948.2 {\scriptsize $\pm$ 652.2} & 6227.9 {\scriptsize $\pm$ 621.1} & 168924.5 {\scriptsize $\pm$ 2045.1} & 55.4 {\scriptsize $\pm$ 9.6} \\
\midrule
\textbf{Proposed} & \textbf{SMFP} & 1 & \first{3825.8 {\scriptsize $\pm$ 29.1}} & \first{6776.5 {\scriptsize $\pm$ 117.4}} & \first{7518.3 {\scriptsize $\pm$ 77.1}} & \first{15769.3 {\scriptsize $\pm$ 125.5}} & \second{10245.7 {\scriptsize $\pm$ 93.5}} & \first{376132.6 {\scriptsize $\pm$ 1107.4}} & \first{137.2 {\scriptsize $\pm$ 11.3}}\\ 
\bottomrule 
\end{tabular}
\begin{tablenotes}
\normalsize
\item[*] DIME parameterises its critic with a distributional Q-function, which can provide richer value information than scalar Q-value commonly used in other methods.
\end{tablenotes}
\end{threeparttable}}
\end{table*}

\begin{figure*}[t]
    \centering   
    \vspace{-0.1cm}
    \includegraphics[trim=0 0 85 0, clip, width=1\textwidth]{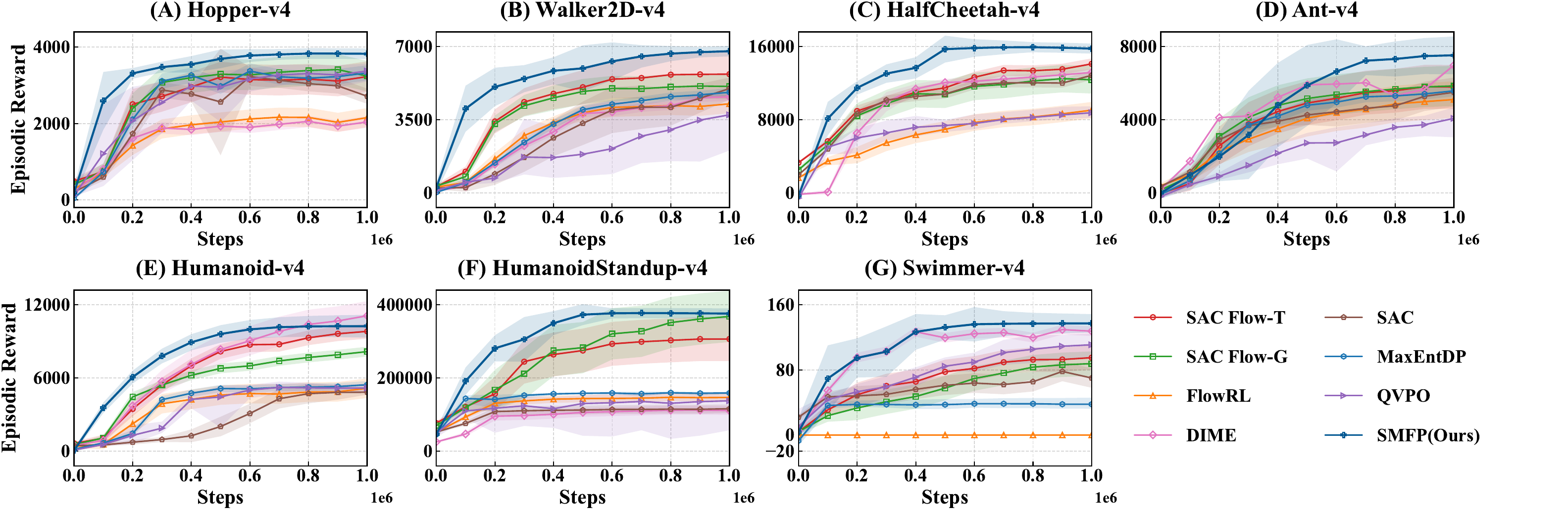} 
    \vspace{-0.5cm}
    \caption{Comparison of evaluation performance across 7 benchmarks. All methods are trained for 1M environment steps. The x-axis denotes environment steps, and the y-axis denotes episodic reward.} 
    \label{fig:exp_result_curve}
    \vspace{-0.4cm}
\end{figure*}

\textbf{Environments and Training Details.} 
We evaluate on the seven widely adopted Gym MuJoCo benchmarks~\cite{brockman2016openai, todorov2012mujoco} to align with established baselines.
These environments feature diverse dynamics and state-action spaces, serving as a comprehensive evaluation benchmark. For statistical reliability, we report mean episodic rewards and standard deviations over 30 random seeds following \cite{colas2018many,patterson2024empirical}.

\textbf{Baselines.} We compare our approach against three categories of online RL algorithms: {(1) Classical Model-Free RL:} Methods employing Gaussian or deterministic policies with efficient single-step inference (1-NFE), including PPO~\cite{schulman2017PPO}, SPO~\cite{chensimplespo}, TD3~\cite{dankwa2019td3}, and SAC~\cite{haarnoja2018soft}. (2) Diffusion-Based RL: Generative policy algorithms such as DIPO~\cite{yang2023dipo}, QSM~\cite{psenka2024qsm}, QVPO~\cite{ding2024diffusion}, DIME~\cite{dime}, MaxEntDP~\cite{MaxEntDP}, and DPMD~\cite{maefficientDPMD}. These methods typically involve iterative sampling, though some are optimised for efficiency. (3) Flow-Based RL: Emerging flow-matching approaches, including SAC Flow~\cite{zhang2025sac}, FlowRL~\cite{lv2025flow}, and FPMD~\cite{chen2025one}. (See Appendix~\ref{app:baseline_comparison} for details on baseline reproduction and the conservative best-available protocol used to account for reproduction variability.)

\subsection{Comparative Performance Evaluation} 
As detailed in Tab.~\ref{tab:main_results} and Fig.~\ref{fig:exp_result_curve}, SMFP achieves the best aggregate performance across seven benchmarks and outperforms both classical and generative policy baselines in most settings. These results indicate that expressive one-step generative policies can benefit from combining entropy-based exploration with mirror descent regularisation. The entropy surrogate promotes stochastic exploration, while the mirror descent projection supports stable policy improvement. With a single inference step (NFE=1), SMFP also avoids the high sampling cost of iterative diffusion policies, leading to a favourable balance between performance and efficiency, as illustrated in Fig.~\ref{fig:introduction_teaser}.
\subsection{Further Analysis}
\begin{figure}[t]
  \centering
  \begin{subfigure}[t]{0.34\textwidth}
    \centering
    \includegraphics[trim=0 0 0 0, clip, width=1\linewidth]{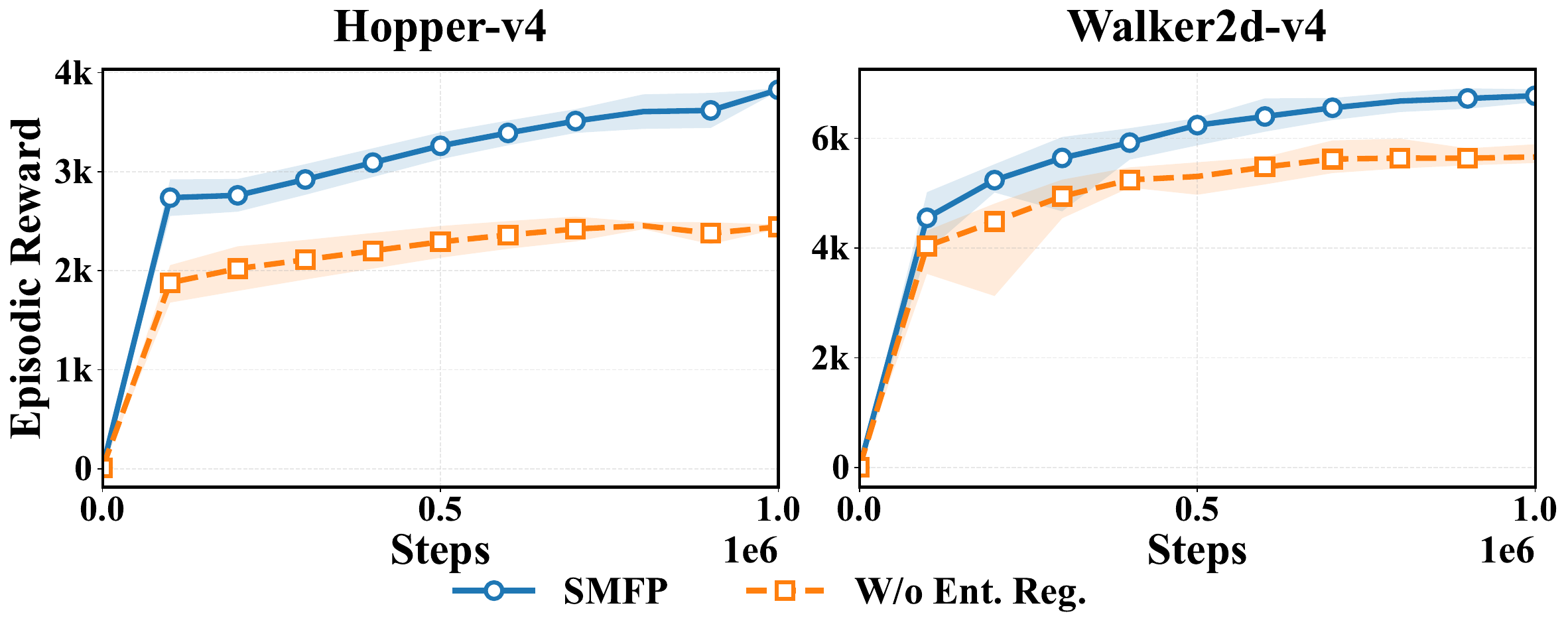}
    \caption{Comparison of SMFP and SMFP without the entropy regulariser. Curves show the mean over 5 seeds.}
    \label{fig:ablation_ent}
  \end{subfigure}
  \hfill
    \begin{subfigure}[t]{0.33\textwidth}
        \centering
        \includegraphics[trim=0 0 0 0, clip, width=1.04\linewidth]{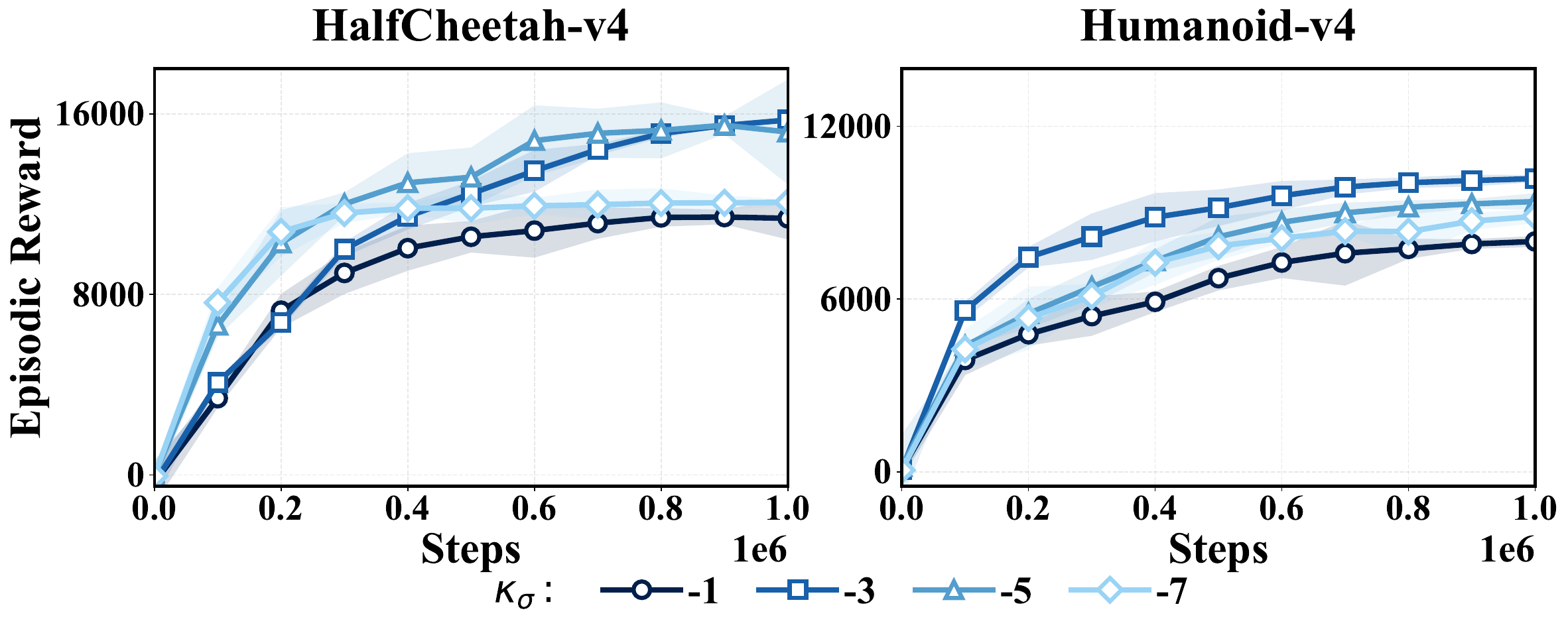} 
        \caption{Performance comparison under different $\kappa_{\sigma}$ settings. Curves show the mean over 5 seeds.}
        \label{fig:noise_scale}
    \end{subfigure}
    \hfill
  \begin{subfigure}[t]{0.29\textwidth}
    \centering
    \includegraphics[trim=0 0 0 0, clip, width=0.99\textwidth]{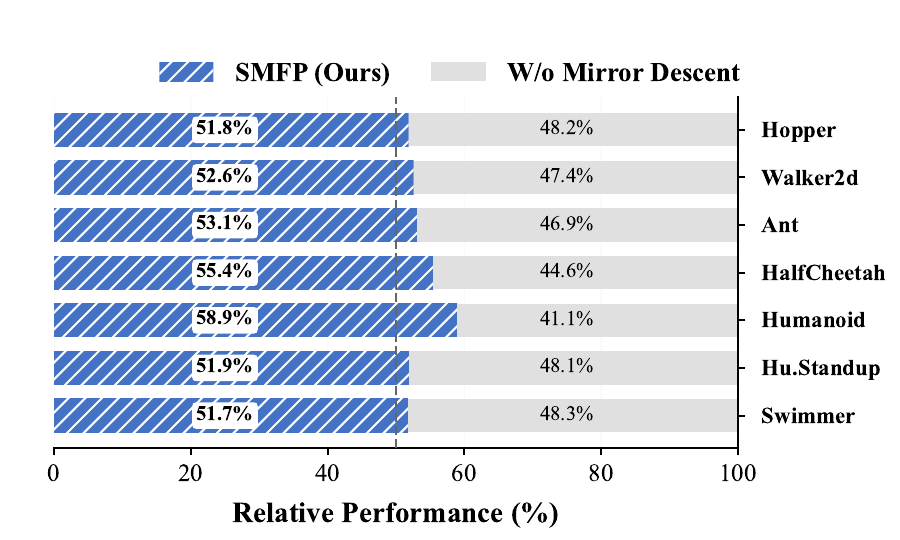} 
    \caption{Relative performance of SMFP and its variant without Mirror Descent.}
    \label{fig:ablation_md}
    \end{subfigure}
  \hfill
\vspace{-0.2cm}
\end{figure}
\textbf{Effect of the entropy regulariser.}
As illustrated in Fig.~\ref{fig:ablation_ent}, removing the entropy regulariser consistently degrades performance relative to the full SMFP. The ablated variant learns more slowly and plateaus at lower returns, suggesting that maintaining a minimum action-noise scale is important for stochastic exploration and effective mirror-descent policy improvement.

\textbf{Sensitivity to the noise-scale threshold.}
Fig.~\ref{fig:noise_scale} evaluates the effect of the log-scale threshold
$\kappa_{\sigma}\in\{-1,-3,-5,-7\}$ in Eq.~\eqref{eq:entropy_term}. Performance
varies non-monotonically with this threshold: less negative values enforce
stronger stochasticity, while more negative values weaken the entropy constraint
and may reduce exploration. Overall, $\kappa_{\sigma}=-3$ achieves the best trade-off, indicating that the noise-scale threshold affects the balance between exploration and refinement.

\textbf{Effect of mirror-descent regularisation.}
We isolate the contribution of mirror descent by comparing SMFP with a variant
that removes the mirror-descent term. As shown in Fig.~\ref{fig:ablation_md},
SMFP consistently outperforms this variant across all benchmarks, with relatively
larger gains on high-dimensional tasks such as Humanoid. This indicates that
mirror-descent regularisation provides a useful stabilising effect during policy
improvement.

\begin{figure}[t]
    \vspace{-0.2cm}
    \centering
    \begin{subfigure}[t]{0.49\textwidth}
    \centering
    \includegraphics[trim=0 0 0 0, clip, width=1\textwidth]{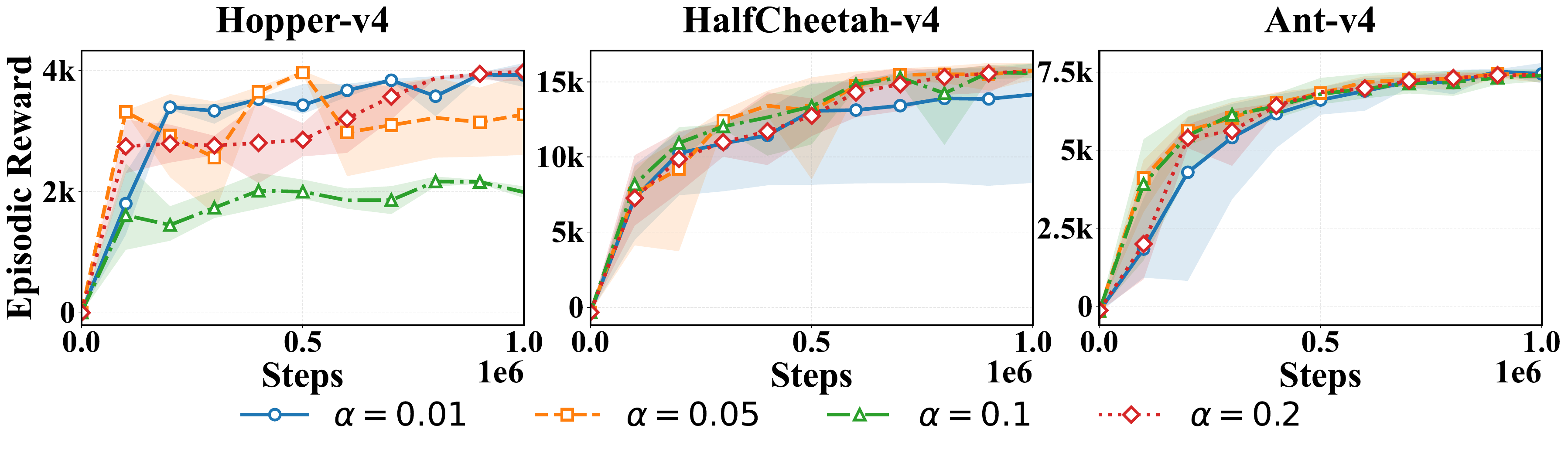} 
    \end{subfigure}
    \hfill
    \begin{subfigure}[t]{0.49\textwidth}
    \centering
    \includegraphics[trim=0 0 0 0, clip, width=1\textwidth]{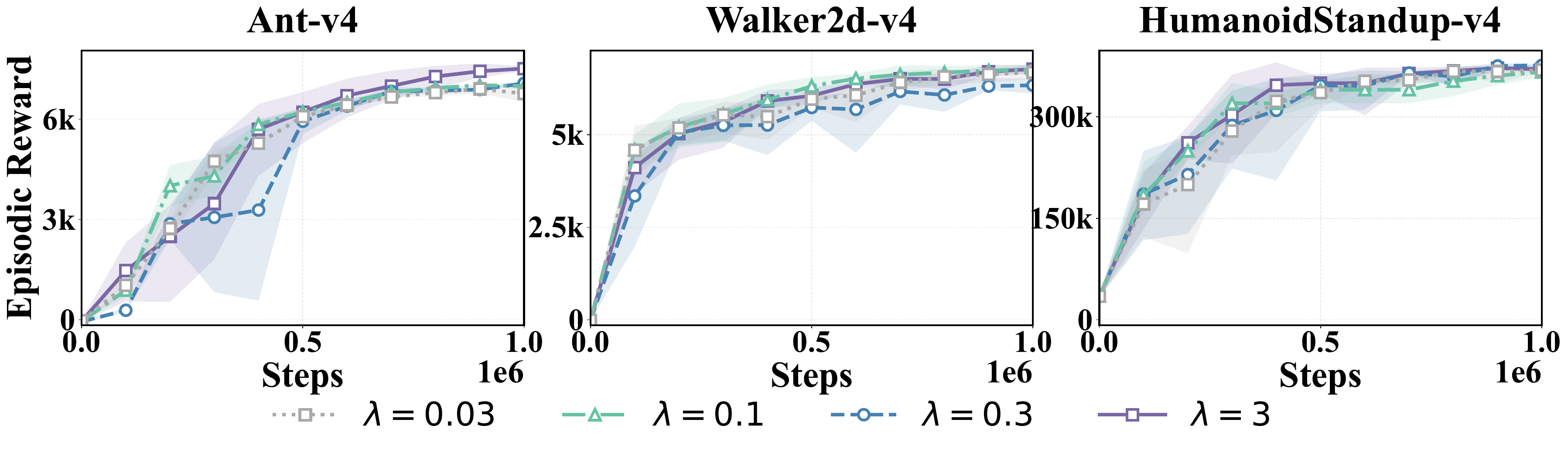} 
    \end{subfigure}
    \vspace{-0.2cm}
    \caption{Sensitivity analysis of the entropy temperature parameter $\alpha$ and the mirror descent regularisation coefficient $\lambda$ on MuJoCo benchmarks. Curves show the mean over 5 seeds.}
    \vspace{-0.5cm}
    \label{fig:hyper_ablation}
\end{figure}

\textbf{Sensitivity to optimisation hyperparameters.}
Fig.~\ref{fig:hyper_ablation} studies the sensitivity of SMFP to the entropy
temperature $\alpha$ and the mirror-descent coefficient $\lambda$. Overall, SMFP
is stable across a moderate range of settings. The entropy temperature $\alpha$
has a more visible effect on some tasks, such as Hopper, where overly large
values can slow policy refinement. In contrast, performance is relatively robust
to $\lambda$ within the tested range, suggesting that the mirror-descent
regularisation is not overly sensitive to its precise coefficient. Additionally, we report the full hyperparameter settings in Appendix~\ref{sec:hyper}.

\section{Conclusion}
\label{sec:conclusion}
In this work, we introduced Stochastic MeanFlow Policies (SMFP), a one-step generative policy framework for online off-policy reinforcement learning. 
SMFP augments MeanFlow policies with an explicit stochastic reparameterisation and a tractable conditional entropy surrogate, enabling SAC-style exploration to be combined with off-policy mirror-descent optimisation. This design addresses the tension between expressive action modelling, efficient inference, and tractable entropy control. 
Empirically, across seven MuJoCo benchmarks, SMFP achieves strong performance against both Gaussian and generative baselines with single-step inference. These results highlight the importance of policy expressivity for realising the benefits of entropy-regularised mirror-descent reinforcement learning, which has remained challenging to implement effectively in practical online RL settings.


\bibliographystyle{plain}
\bibliography{reference}

\clearpage


\appendix
\section{Limitations and Future Work}
Although the hybrid optimisation framework is broadly applicable in principle, the tractable surrogate objective developed here relies on the reparameterised structure of the MeanFlow policy. Extending it to other generative policy classes may therefore require policy-specific entropy bounds and regularisation schemes.
Within this framework, we keep $\alpha$ and $\lambda$ fixed throughout training to avoid confounding scheduling effects and to isolate the contribution of the proposed objective and policy parameterisation.
Future work will explore principled adaptive weighting schemes for $\alpha$ and $\lambda$, which may improve performance in some settings but require careful control of hyperparameter sensitivity and run-to-run variability. We will also investigate stronger critic architectures, such as CrossQ~\cite{crossq} and distributional Q-learning~\cite{bellemare2017distributional}, for improving value estimation and overall training behaviour.

\section{SMFP Training Procedure}
\label{app:algorithm}

\begin{algorithm}[htbp]
\caption{Stochastic MeanFlow Policy Optimization}
\label{alg:SMFP}
\begin{algorithmic}
\footnotesize
\State Initialize replay buffer $\mathcal{D}$, stochastic MeanFlow policy network $g_\theta(a\mid s)$, critic network $Q_{\phi}(s,a)$, Environment $\mathcal{M}$
\For{$i = 1$ \textbf{to} $N$ }
    \State Interact with $\mathcal{M}$ using policy $\pi_{\theta}^{K_b}(\cdot \mid s_h)$, update replay buffer $\mathcal{D}$
    \State Sample a mini-batch $\mathcal{B}\{(s, a, r, s')\} \sim \mathcal{D}$
    \BeginBox[fill=mask!60]
    \LComment{\color{hiccup} Train Stochastic One-Step MeanFlow Policy $\pi_\theta$}
    \State Generate $N_{adv}$ samples from $\pi^{K_b}_\theta(\cdot\mid s)$ for each state $s$.
    \State Endow the $N_{adv}$ samples with weights (Eq. \ref{equ:adv}).
    \State Update {\color{hiccup}$\theta$} to minimise Eq.\ref{eq:overall_loss_tractable} \Comment{SMFP Loss}
    \EndBox
        \BeginBox[fill=gray!8]
    \LComment{\color{hiccup} Train Critic $Q_\phi$}
    \State Sample the next action $a'$ using policy  $\pi_{\theta}^{K_t}(\cdot \mid s')$.
    \State Update $Q_\phi$ by minimizing Eq.\ref{eq:critic_loss} \Comment{Critic Loss}
    \EndBox
\EndFor 
\Return One-step SMFP policy $\pi_\theta$
\end{algorithmic}
\end{algorithm}

\section{Proofs}
\subsection{Theoretical Derivations for Entropy Bounds}
\label{app:entropy_proofs}
This section details the formal derivation of the entropy lower bound and the maximum entropy justification for the Stochastic MeanFlow objective. Through the reparameterisation in Eq.~\ref{eq:reparam_action}, the generative process induces a conditional distribution over actions. Specifically, conditioned on the latent noise $e$ and context $(b, t)$, the policy is modelled as a diagonal Gaussian density $\pi_\theta(\cdot \mid a_t, b, t) = \mathcal{N}(\mu_\theta, \Sigma_\theta)$, where the mean $\mu_\theta = a_t - u_\theta(a_t, b, t)$ and covariance $\Sigma_\theta = \operatorname{diag}(\sigma_\theta^2)$ are predicted by the network.

While maximum entropy reinforcement learning requires evaluating the entropy of the marginal policy $\pi_\theta(\hat{a})$ (see Eq.~\ref{eq:reparam_action}), this involves an intractable integral over the latent space of $e$. To address this, we derive a tractable surrogate objective:

\begin{proposition}[\textbf{Tractable Entropy Lower Bound}]
\label{prop:entropy_lb}
Let the policy induce a conditional generative model dependent on the latent variable $e \sim \mathcal{N}(0, I)$. We distinguish the stochastic noise variable $e$ from Euler's number $\color{hiccup}{\mathrm{e}}$. 

To avoid the intractable integration over $e$ in the marginal entropy $\mathcal{H}(\pi_\theta)$, we employ an information-theoretic decomposition. Specifically, the total entropy comprises the conditional entropy $\mathcal{H}(\pi_\theta \mid e)$ and the mutual information $I_\theta(\pi_\theta; e)$, which quantifies the dependence between the policy and the latent noise. Leveraging the non-negativity of mutual information ($I_\theta \ge 0$), we derive the following tractable lower bound:
\begin{equation}
    \mathcal{H}(\pi_\theta) = \mathcal{H}(\pi_\theta \mid e) + \underbrace{I_\theta(\pi_\theta; e)}_{\ge 0} \ge \mathcal{H}(\pi_\theta \mid e).
\end{equation}
The conditional term affords an analytical solution based on the predicted Gaussian parameters:
\begin{equation}
\begin{aligned}
        &\mathcal{H}(\pi_\theta \mid e) = \mathbb{E}_{e}\left[\sum_{i=1}^{d} \log \sigma_\theta^{(i)}(a_t, b, t)\right] + \underbrace{\frac{d}{2}\log(2\pi {\color{hiccup}\mathrm{e}})}_{const.}\\
        \Rightarrow \quad &\tilde{\mathcal{H}}(\pi_\theta \mid e) = \mathbb{E}_{e}\left[\sum_{i=1}^{d} \log \sigma_\theta^{(i)}(a_t, b, t)\right].
        \label{eq:tractable_entropy}
\end{aligned}
\end{equation}
Consequently, maximising the expected sum of log-standard deviations $\tilde{\mathcal{H}}(\pi_\theta \mid e)$ constitutes a valid maximisation of the policy entropy lower bound.
\end{proposition}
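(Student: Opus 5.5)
The plan is to treat the action $\hat a$ and the latent noise $e$ as jointly distributed random variables (for fixed state $s$ and fixed context $b=0,t=1$, so that $a_t=e$), and to use the chain rule for differential entropy. Since $\epsilon\sim\mathcal{N}(0,I)$ is independent of $e$, the conditional law of $\hat a$ given $e$ is exactly the diagonal Gaussian $\mathcal{N}(\mu_\theta(e),\operatorname{diag}(\sigma_\theta^2(e)))$ from Eq.~\eqref{eq:reparam_action}. Provided $\sigma_\theta^{(i)}>0$, this gives a genuine joint density $p(\hat a,e)=\pi_\theta(\hat a\mid e)p(e)$. The marginal $\pi_\theta(\hat a)=\int\pi_\theta(\hat a\mid e)p(e)\,de$ is then a Gaussian mixture, hence a bona fide density with finite entropy under mild integrability of $\mu_\theta$ and $\log\sigma_\theta$.

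First I establish the decomposition. I write
\begin{equation*}
I_\theta(\hat a;e)=\mathbb{E}_{e}\left[D_{\mathrm{KL}}\bigl(\pi_\theta(\cdot\mid e)\,\|\,\pi_\theta(\cdot)\bigr)\right].
\end{equation*}
Expanding the logarithm, $\log\pi_\theta(\hat a\mid e)-\log\pi_\theta(\hat a)$, and taking expectations over the joint gives
\begin{equation*}
I_\theta=-\mathcal{H}(\pi_\theta\mid e)+\mathcal{H}(\pi_\theta).
\end{equation*}
Nonnegativity of $I_\theta$ then follows from nonnegativity of each KL term (Gibbs' inequality), and this yields $\mathcal{H}(\pi_\theta)\ge\mathcal{H}(\pi_\theta\mid e)$.

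Second, I compute the conditional term in closed form. For fixed $e$, the entropy of a diagonal Gaussian is
\begin{equation*}
\frac12\log\det(2\pi\mathrm{e}\,\Sigma_\theta)=\sum_{i=1}^d\log\sigma_\theta^{(i)}+\frac d2\log(2\pi\mathrm{e}).
\end{equation*}
It does not depend on $\mu_\theta$, which is where the otherwise intractable $u_\theta$ drops out. Averaging over $e$ gives the first line of Eq.~\eqref{eq:tractable_entropy}. Since the additive constant has zero gradient in $\theta$, maximising $\tilde{\mathcal H}$ is equivalent to maximising $\mathcal{H}(\pi_\theta\mid e)$, which is a lower bound on $\mathcal{H}(\pi_\theta)$.

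The main obstacle is rigour rather than algebra. Differential entropies can be negative or infinite, so subtracting them requires finiteness. I would assume that $\mathbb{E}_e\|\mu_\theta(e)\|^2<\infty$ and that $\log\sigma_\theta$ is bounded, or at least integrable. Under these assumptions the mixture has finite second moment, so its entropy is bounded above by that of a Gaussian with the same covariance. Its entropy is also bounded below by the conditional entropy via concavity of entropy, with the concavity bound checked directly through Jensen's inequality. With both sides finite, the identity and the inequality hold rigorously.
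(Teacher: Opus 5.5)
Your proposal is correct and follows essentially the same route as the paper. Both use the decomposition $\mathcal{H}(\pi_\theta)=\mathcal{H}(\pi_\theta\mid e)+I_\theta$ with $I_\theta\ge 0$, then the closed-form entropy of the conditional diagonal Gaussian, in which $\mu_\theta$ drops out and the constant $\frac{d}{2}\log(2\pi\mathrm{e})$ is discarded; writing $I_\theta$ as an expected KL divergence and stating finiteness conditions on $\mu_\theta$ and $\log\sigma_\theta$ (needed before differential entropies can be subtracted) tightens what the paper simply asserts.
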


Building upon this bound, we construct a soft target for the critic. To rigorously justify this objective, we analyse the asymptotic alignment between the policy variance and a diffuse prior $p_0(a_t) = \mathcal{N}(\mu_0, \tau^2 I)$ in the limit $\tau \to \infty$.

\begin{proposition}[\textbf{Max-Entropy Derivation}]
\label{prop:kl_limit}
Minimising the Kullback-Leibler (KL) divergence between the conditional policy $\pi_\theta(\cdot \mid e)$ and the prior $p_0$ in the asymptotic limit $\tau \to \infty$ is equivalent to maximising the entropy lower bound derived in Proposition~\ref{prop:entropy_lb}.
\end{proposition}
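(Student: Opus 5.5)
The plan is to compute the KL divergence between the conditional Gaussian policy $\pi_\theta(\cdot\mid e)=\mathcal{N}(\mu_\theta,\operatorname{diag}(\sigma_\theta^2))$ and the diffuse prior $p_0=\mathcal{N}(\mu_0,\tau^2 I)$ in closed form. We then average it over $e\sim\mathcal{N}(0,I)$ and track which terms depend on $\theta$ as $\tau\to\infty$. Concretely, we write
\begin{equation*}
D_{\text{KL}}\bigl(\pi_\theta(\cdot\mid e)\,\|\,p_0\bigr) = -\mathcal{H}\bigl(\pi_\theta(\cdot\mid e)\bigr) - \mathbb{E}_{a\sim\pi_\theta(\cdot\mid e)}\bigl[\log p_0(a)\bigr].
\end{equation*}
The first term is the Gaussian entropy already used in Proposition~\ref{prop:entropy_lb}, namely $\sum_i \log\sigma_\theta^{(i)} + \frac{d}{2}\log(2\pi\mathrm{e})$.

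The cross-entropy term is explicit:
\begin{equation*}
-\mathbb{E}[\log p_0(a)] = \frac{d}{2}\log(2\pi\tau^2) + \frac{1}{2\tau^2}\Bigl(\|\mu_\theta-\mu_0\|_2^2 + \sum_{i=1}^d (\sigma_\theta^{(i)})^2\Bigr).
\end{equation*}
Taking $\mathbb{E}_e$ of both pieces gives
\begin{equation*}
\mathbb{E}_e\bigl[D_{\text{KL}}\bigr] = -\tilde{\mathcal{H}}(\pi_\theta\mid e) + C(\tau) + \frac{1}{2\tau^2}R(\theta),
\end{equation*}
with $C(\tau)=\frac{d}{2}\log(\tau^2/\mathrm{e})$ independent of $\theta$, and
\begin{equation*}
R(\theta)=\mathbb{E}_e\bigl[\|\mu_\theta-\mu_0\|^2+\|\sigma_\theta\|^2\bigr].
\end{equation*}

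Next, we argue equivalence of the optimisation problems in the limit. The additive constant $C(\tau)$ diverges, but it does not affect minimisers or gradients. The $\theta$-dependent remainder $\frac{1}{2\tau^2}R(\theta)$, together with its gradient, vanishes as $\tau\to\infty$, under the mild assumption that $R(\theta)$ and $\nabla_\theta R(\theta)$ are finite. This assumption holds, for example, when the network outputs are bounded on the parameter set or the actions are squashed to a bounded $\mathcal{A}$. Hence $\nabla_\theta\mathbb{E}_e[D_{\text{KL}}]\to-\nabla_\theta\tilde{\mathcal{H}}(\pi_\theta\mid e)$, so minimising the expected KL asymptotically coincides with maximising the surrogate $\tilde{\mathcal{H}}$ of Eq.~\eqref{eq:tractable_entropy}. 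Proposition~\ref{prop:entropy_lb} then identifies this surrogate as a lower bound on the marginal entropy.

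The main obstacle is making ``equivalent in the limit'' precise. The naive objective has a divergent constant, and pointwise vanishing of $R/\tau^2$ does not by itself give convergence of minimisers: $\sigma_\theta$ could grow with $\tau$, and unconstrained $\sigma$ makes $\tilde{\mathcal{H}}$ unbounded. We would handle this in two steps. First, we restrict to a compact parameter set, or to bounded $\sigma_\theta$ and $\mu_\theta$, so that $R$ is uniformly bounded. Second, we show that $\mathbb{E}_e[D_{\text{KL}}]-C(\tau)$ converges uniformly to $-\tilde{\mathcal{H}}$, and likewise for gradients if differentiable dependence is assumed. Uniform convergence then yields that limit points of minimisers are maximisers of $\tilde{\mathcal{H}}$, and that the gradient directions coincide.
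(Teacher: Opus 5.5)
Your proposal is correct and follows essentially the same route as the paper's proof. Both write the KL between the diagonal Gaussian $\pi_\theta(\cdot\mid e)$ and $\mathcal{N}(\mu_0,\tau^2 I)$ in closed form, drop the $\theta$-independent term, and note that the $\tau^{-2}$-scaled mean/variance penalty vanishes, leaving $-\sum_i\log\sigma_\theta^{(i)}$. Your version is more careful than the paper's, which (besides writing $u_\theta$ where the action mean $\mu_\theta$ belongs) simply asserts $\lim_{\tau\to\infty}\arg\min_\theta D_{\text{KL}}=\arg\max_\theta\sum_i\log\sigma_\theta^{(i)}$. Your boundedness assumption and uniform-convergence argument are what actually justify a limit statement here: without them the KL minimisers sit at $\sigma_\theta=\tau\to\infty$ and the limiting argmax is empty. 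You also rightly claim only that limit points of minimisers maximise $\tilde{\mathcal{H}}$, since the vanishing term can still act as a tie-breaker among maximisers.
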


\begin{proof} 
Substituting the Gaussian log-likelihoods into $D_{\text{KL}}(\pi_\theta \| p_0)$, the optimisation objective becomes:
\begin{equation}
    \mathcal{J}(\theta) \propto -\sum_{i} \log \sigma_\theta^{(i)} + \frac{1}{2\tau^2} \left( \|u_\theta - \mu_0\|^2 + \|\sigma_\theta\|_F^2 \right).
\end{equation}
As $\tau \to \infty$, the regularisation term scaled by $\tau^{-2}$ vanishes. The problem thus reduces to:
\begin{equation}
    \lim_{\tau \to \infty} \arg\min_{\theta} D_{\text{KL}} = \arg\max_{\theta} \sum_{i} \log \sigma_\theta^{(i)}.
\end{equation}
This establishes the equivalence between minimising the divergence against a diffuse prior and maximising the entropy lower bound. 
\end{proof}

\subsection{Derivation of the Stochastic MeanFlow Objective}
\label{app:derivation_smf}
In this section, we derive the training objective for Stochastic MeanFlow. Our goal is to extend the deterministic MeanFlow identity to support a stochastic generative policy $g_\theta$, parameterised by a residual flow with learnable variance.

Following the reformulation principles discussed in ~\cite{meanflowql}, we define the stochastic policy $g_\theta(a_t, b, t)$ by decomposing the trajectory into a deterministic residual and a stochastic perturbation, and obtain the parameterisation:
\begin{equation}
    g_\theta(a_t, b, t) = a_t - u_\theta(a_t, b, t) + \sigma_\theta(a_t, b, t) \odot \epsilon, \quad \text{where } \epsilon \sim \mathcal{N}(0, I).
    \label{eq:stochastic_g}
\end{equation}
Rearranging Eq.~\ref{eq:stochastic_g}, we express the underlying velocity generator $u_\theta$ as:
\begin{equation}
    u_\theta(a_t, b, t) = a_t - g_\theta(a_t, b, t) + \sigma_\theta(a_t, b, t) \odot \epsilon.
    \label{eq:u_def}
\end{equation}
A fundamental property of the MeanFlow framework ~\cite{meanflow} is the consistency between the generator $u_\theta$ and the vector field $v(a_t, t)$. This is governed by the \emph{MeanFlow Identity}:
\begin{equation}
    u_\theta(a_t, b, t) = v(a_t, t) - (t - b) \frac{d}{dt} u_\theta(a_t, b, t).
    \label{eq:meanflow_core}
\end{equation}
Substituting the expression for $u_\theta$ (Eq.~\ref{eq:u_def}) into Eq.~\ref{eq:meanflow_core}, we obtain:
\begin{equation}
    a_t - g_\theta + \sigma_\theta \odot \epsilon = v(a_t, t) - (t - b) \frac{d}{dt} \left( a_t - g_\theta + \sigma_\theta \odot \epsilon \right).
\end{equation}
Solving for $g_\theta$, and noting that the flow dynamics satisfy $\frac{d a_t}{dt} = v(a_t, t)$, the equation expands as follows:
\begin{equation}
\begin{aligned}
    g_\theta &= a_t + \sigma_\theta \odot \epsilon - v(a_t, t) + (t - b) \left[ \frac{d a_t}{dt} - \frac{d g_\theta}{dt} + \frac{d \sigma_\theta}{dt} \odot \epsilon \right] \\
    &= a_t + \sigma_\theta \odot \epsilon - v(a_t, t) + (t - b)v(a_t, t) - (t - b)\left( \frac{d g_\theta}{dt} - \frac{d \sigma_\theta}{dt} \odot \epsilon \right) \\
    &= a_t + (t - b - 1)v(a_t, t) + \sigma_\theta \odot \epsilon - (t - b)\left( \frac{d g_\theta}{dt} - \frac{d \sigma_\theta}{dt} \odot \epsilon \right).
\end{aligned}
\label{eq:g_expanded}
\end{equation}

To compute the total time derivatives $\frac{d}{dt} g_\theta$ and $\frac{d}{dt} \sigma_\theta$, we apply the chain rule along the flow trajectory. For any differentiable function $f(a_t, b, t)$, the total derivative is given by:
\begin{equation}
    \frac{d}{dt} f(a_t, b, t) = \nabla_{a_t} f \cdot \frac{d a_t}{dt} + \nabla_b f \cdot \frac{d b}{dt} + \nabla_t f \cdot \frac{d t}{dt}.
\end{equation}
Given the flow dynamics $\frac{d a_t}{dt} = v(a_t, t) = e-a$, $\frac{d b}{dt} = 0$, and $\frac{d t}{dt} = 1$, this corresponds to the Jacobian-Vector Product (JVP) operator:
\begin{equation}
    \frac{d}{dt} f(a_t, b, t) = \texttt{jvp}\big(f, (a_t, b, t), (v(a_t, t), 0, 1)\big).
\end{equation}
Applying this operator to both $g_\theta$ and $\sigma_\theta$ in Eq.~\ref{eq:g_expanded}, we derive the explicit regression target $g_{\text{tgt}}$:
\begin{equation}
    g_{\text{tgt}} \triangleq a_t + (t - b - 1) v(a_t, t) + \sigma_\theta \odot \epsilon - (t - b) \left[ \texttt{jvp}(g_\theta) - \texttt{jvp}(\sigma_\theta) \odot \epsilon \right].
\end{equation}

\paragraph{Training Objective.}
Finally, to enforce the derived identity, we minimise the discrepancy between the predicted generator $g_\theta$ and the target $g_{\text{tgt}}$. We employ the Huber loss $\mathcal{L}_\delta$ for robustness against outliers and apply the stop-gradient operator $\operatorname{sg}(\cdot)$ to the target to prevent unstable feedback loops during training. The Stochastic MeanFlow objective is formulated as:
\begin{equation}
    \mathcal{L}_{\text{SMF}}(\theta) = \mathbb{E}_{t, b, \epsilon} \left[ \mathcal{L}_{\delta}\left( g_\theta(a_t, b, t) - \operatorname{sg}(g_{\text{tgt}}) \right) \right],
\end{equation}
where $\mathcal{L}_\delta$ denotes the Huber loss with threshold $\delta=1.0$:
\begin{equation}
    \mathcal{L}_\delta(x) =
    \begin{cases}
    \frac{1}{2} x^2 & \text{if } |x| \le \delta, \\
    \delta (|x| - \frac{1}{2}\delta) & \text{otherwise}.
    \end{cases}
\end{equation}
This objective ensures that the learned policy $g_\theta$ maintains temporal consistency with the underlying flow dynamics while accurately modelling the stochastic variance required for exploration.

\subsection{Justification for Truncated Advantage Weighting.}
\label{app:addvantage}
Our goal is to project the parameterised flow $\pi_\theta$ onto the optimal target distribution $\pi^*$ established by the Mirror Descent formulation. As derived in Eq.~\eqref{eq:pmd_closed_form}, this target takes the form of an energy-based reweighting of the behaviour policy (old policy):
\begin{equation}
    \pi^*(a|s) = \frac{1}{Z(s)} \pi_{\text{old}}(a|s) \exp\left(\frac{Q^{\pi_{\text{old}}}(s, a)}{\lambda}\right),
    \label{eq:target_dist_detailed}
\end{equation}
where $\lambda$ is the regularisation coefficient and $Z(s)$ is the partition function normalising the distribution.

Since sampling directly from the unnormalised target $\pi^*$ is computationally intractable, we formulate the projection as a weighted maximum likelihood problem. Specifically, we aim to minimise the Kullback-Leibler (KL) divergence $\mathcal{D}_{\text{KL}}(\pi^* \| \pi_\theta)$, which is equivalent to maximising the expected log-likelihood of $\pi_\theta$ under the target distribution $\pi^*$. Treating the behaviour policy $\pi_{\text{old}}$ as the proposal distribution, we apply importance sampling to derive a tractable objective:
\begin{equation}
\begin{aligned}
    \mathcal{L}_{\text{MD}}(\theta) &= \mathbb{E}_{a \sim \pi^*} \left[ - \log \pi_\theta(a|s) \right] \\
    &= \mathbb{E}_{a \sim \pi_{\text{old}}} \left[ \frac{\pi^*(a|s)}{\pi_{\text{old}}(a|s)} \left( - \log \pi_\theta(a|s) \right) \right] \\
    &= \mathbb{E}_{a \sim \pi_{\text{old}}} \left[ \frac{\frac{1}{Z(s)}\pi_{\text{old}}(a|s) \exp\left(\lambda^{-1} Q(s, a)\right)}{\pi_{\text{old}}(a|s)} \left( - \log \pi_\theta(a|s) \right) \right] \\
    &= \mathbb{E}_{a \sim \pi_{\text{old}}} \left[ \frac{1}{Z(s)} \exp\left( \frac{Q(s, a)}{\lambda} \right) \left( - \log \pi_\theta(a|s) \right) \right].
\end{aligned}
\label{eq:derivation_step1}
\end{equation}
To reduce the variance of the importance weights, we centre the Q-values using the state value function $V(s)$. Note that $V(s)$ and the partition function $Z(s)$ are independent of the action $a$. Consequently, they act as multiplicative constants with respect to the policy parameters $\theta$ and do not affect the gradient direction. We can thus rewrite the weighting term using the advantage function $A^{\pi_{\text{old}}}(s, a) = Q(s, a) - V(s)$:
\begin{equation}
\begin{aligned}
    \frac{1}{Z(s)} \exp\left( \frac{Q(s, a)}{\lambda} \right) 
    &= \frac{1}{Z(s)} \exp\left( \frac{A^{\pi_{\text{old}}}(s, a) + V(s)}{\lambda} \right) \\
    &\propto \exp\left( \frac{A^{\pi_{\text{old}}}(s, a)}{\lambda} \right).
\end{aligned}
\end{equation}
Substituting this back into Eq.~\eqref{eq:derivation_step1}, and replacing the intractable log-likelihood term $-\log \pi_\theta(a|s)$ with the Stochastic MeanFlow (SMF) surrogate loss $\mathcal{L}_{\text{SMF}}(\theta; s, a)$ (recalling that minimising the flow matching loss is equivalent to maximising the likelihood), we obtain the theoretical weighted regression objective:
\begin{equation}
    \mathcal{L}(\theta) = \mathbb{E}_{s \sim \mathcal{D}, a \sim \pi_{\text{old}}} \left[ \underbrace{\exp\left( \frac{A^{\pi_{\text{old}}}(s, a)}{\lambda} \right)}_{w(s, a)} \mathcal{L}_{\text{SMF}}(\theta; s, a) \right].
    \label{eq:theoretical_exp_loss}
\end{equation}
Theoretically, the optimal projection derived in Eq.~\eqref{eq:theoretical_exp_loss} requires importance weights $w(s, a) = \exp\left( \frac{A(s, a)}{\lambda} \right)$. However, directly utilising exponential weights in stochastic optimisation often leads to numerical instability and high variance, particularly when the advantage values are unbounded or when the regularisation coefficient $\lambda$ is small.

To mitigate this, we employ a first-order Taylor approximation to linearise the weighting scheme while preserving its optimisation landscape. Consider the first-order Taylor expansion of the exponential function around the point where the advantage is zero ($A(s, a) \approx 0$):
\begin{equation}
    \exp\left( \frac{A(s, a)}{\lambda} \right) \approx 1 + \frac{A(s, a)}{\lambda}.
\end{equation}
In the context of weighted regression, the constant $1$ acts as a baseline shift, and the inverse temperature $\lambda^{-1}$ serves as a scalar multiplier for the learning rate. Since the optimisation direction is determined by the relative magnitude of the weights rather than their absolute scale, we can simplify the linear term to be proportional to $A(s, a)$.

However, a direct linear approximation fails to capture the strictly positive property of the exponential function (i.e., $\exp(x) > 0$), potentially assigning negative weights to sub-optimal actions where $A(s, a) < 0$. Negative weights would erroneously push the policy away from the data, contradicting the generative principle of flow matching. To address this, we apply a rectification to the advantage, equivalent to a ReLU operation:
\begin{equation}
w(s, a) = \max\left( 0, Q(s, a) - V(s) \right) \propto \left( A^{\pi_{\text{old}}}(s, a) \right)^+.
\label{eq:final_relu_weight}
\end{equation}
It is worth noting that this formulation eliminates the need for a separate value network to estimate the baseline $V(s)$. Instead, we empirically approximate $V(s)$ by averaging the Q-values of multiple actions sampled from the behaviour policy. Substituting this robust weighting scheme back into the objective yields our final tractable loss function:
\begin{equation}
    \mathcal{L}_{\text{Final}}(\theta) = \mathbb{E}_{s \sim \mathcal{D}, a \sim \pi_{\text{old}}} \left[ \left( A^{\pi_{\text{old}}}(s, a) \right)^+ \mathcal{L}_{\text{SMF}}(\theta; s, a) \right].
\end{equation}
This approximation is not merely a numerical heuristic; it preserves three critical properties of the optimal Boltzmann weights:

\noindent \textbf{1. Monotonicity.} 
Higher advantage samples receive strictly higher weights, preserving the rank-based preference for high-value regions. The rectification ensures that sub-optimal actions (where $A < 0$) are filtered out, focusing the generative modelling solely on the improving regions of the action space.

\noindent \textbf{2. Numerical Stability and Implicit Gradient Clipping.} 
The exponential function is highly sensitive to the scale of Q-values. In practice, unbounded Q-values can cause $\exp(A/\lambda)$ to explode, resulting in numerical overflow and unstable gradients. The truncated linear approximation $(Q-V)^+$ naturally bounds the weights and acts as an implicit gradient clipper. This prevents outlier Q-values from destabilising the flow matching training while maintaining the necessary selection pressure for high-value actions.

\noindent \textbf{3. Scale Invariance and Hyperparameter Robustness.} 
In standard exponential weighting $\exp(A/\lambda)$, the regularisation coefficient $\lambda$ must be carefully tuned for each environment, as the scale of return (and thus the Q-values) varies significantly across different tasks. A fixed $\lambda$ can lead to vanishing gradients in high-reward environments or uniform weights in low-reward ones. In contrast, the rectified advantage $(Q-V)^+$ is inherently self-scaling relative to the value baseline. This formulation significantly reduces the sensitivity to environment-specific tuning, ensuring consistent performance across tasks with varying reward magnitudes.

\subsection{Discussion: Entropic Mirror Descent with Generative Policies}
\label{subsec:theoretical_justification}

SAC-style entropy maximisation and mirror descent impose complementary but
distinct optimisation pressures. Entropy encourages stochastic exploration,
whereas mirror descent anchors policy improvement to the previous policy. Their
combination can therefore induce a multi-modal target: high-value regions should
be reached, but probability mass must still remain compatible with the previous
policy. As illustrated in Fig.~\ref{fig:intro_network}, such targets are difficult
for unimodal Gaussian policies, which tend to cover only a dominant mode.

SMFP is designed to realise this target with a one-step generative policy. The
advantage-weighted MeanFlow regression mainly shapes the action-generation
branch, guiding the policy towards high-value actions sampled from the previous
policy. In contrast, the entropy-floor regulariser acts on the stochastic scale
branch, preserving a controlled level of action noise. This separates the roles
of the two components: mirror descent provides stable value-driven improvement,
while entropy regularisation maintains exploration.

This division is reflected in the distribution visualisation in
Fig.~\ref{fig:intro_network}. Conventional flow-based policies can capture
multi-modality, but their samples tend to concentrate as training progresses.
SMFP covers the target modes while retaining broader stochastic coverage,
suggesting that generative expressivity and entropy-controlled stochasticity are
both important for effective entropic mirror-descent optimisation.

\section{Additional Experimental Results}
\subsection{Hyperparameter Sensitivity and Analysis}
\begin{figure}[H]
  \centering
    \begin{subfigure}[t]{0.46\textwidth}
    \centering
    \includegraphics[trim=0 0 0 0, clip, width=1.0\textwidth]{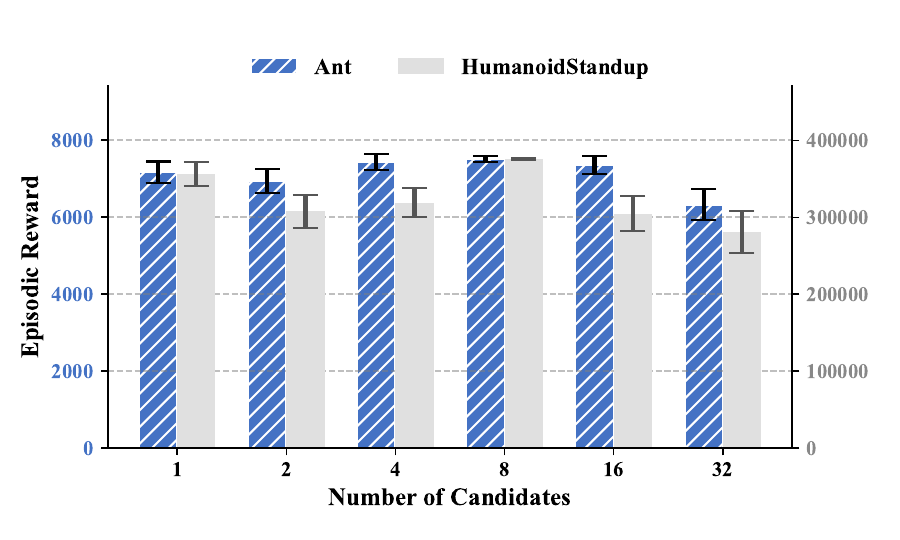} 
    \caption{Impact of the number of candidates $K$ on performance across two tasks using value-guided rejection sampling.}
    \label{fig:ablation_k}
    \end{subfigure}
    \hfill
    \begin{subfigure}[t]{0.52\textwidth}
    \centering
    \includegraphics[trim=0 0 0 0, clip, width=1.05\textwidth]{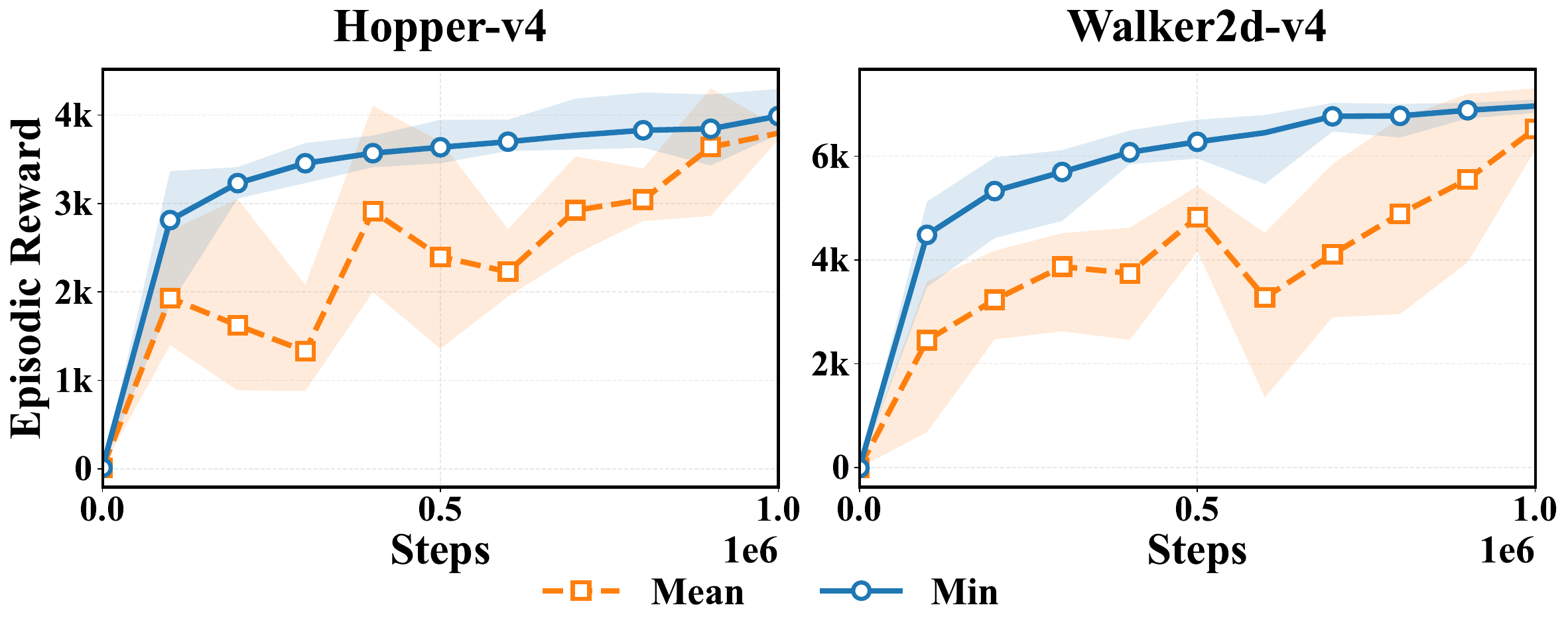} 
    \caption{{Sensitivity analysis of Q-network aggregation strategies.}}
    \label{fig:q_agg} 
    \end{subfigure}

  \vspace{-0.15cm}
\end{figure}

\textbf{Effect of candidate size $K$ in value-guided selection.}
Fig.~\ref{fig:ablation_k} illustrates the impact of pool size $K$. Performance peaks at $K=8$, effectively filtering low-quality samples, whereas larger sizes (e.g., $K=32$) may degrade performance  due to Q-value overestimation. Thus, $K=8$ offers the optimal balance between quality and stability.

\textbf{Sensitivity to Q-value aggregation.} We further analyse the target Q-value aggregation strategy \texttt{q\_agg}. As
shown in Fig.~\ref{fig:q_agg}, using the \textit{min} operator consistently
outperforms the \textit{mean} operator. This suggests that mean aggregation is
less robust in our maximum-entropy framework, whereas the conservative
\textit{min} aggregation provides more stable target estimates and improves final
performance. Accordingly, we use \textit{min} as the default Q-aggregation
strategy in SMFP.

\subsection{Training and Inference Time. }
Tab.~\ref{tab:time_antv3_highlight} benchmarks computational efficiency on Ant-v4. SMFP attains the highest asymptotic return while preserving a low inference latency of 0.42ms via single-step generation. Notably, the computational overhead of the Best-of-K strategy is effectively mitigated by leveraging JAX's \texttt{vmap} for parallel execution. Consequently, unlike iterative diffusion baselines (e.g., QVPO, DIME) that necessitate 16–20 sampling steps, SMFP achieves a favorable trade-off between performance and speed, maintaining costs comparable to standard off-policy algorithms. While inference sees an order-of-magnitude acceleration, training time reductions are more moderate. This stems from the inherent structure of Mirror Descent, which requires sampling from the old policy to constrain policy updates, a necessary cost that trades slight computational overhead for enhanced training stability.
\begin{table}[ht]
    \centering
        \caption{Training and inference efficiency comparison on Ant-v4. {SMFP} achieves the best performance with negligible inference latency. }
    \label{tab:time_antv3_highlight}
    \scalebox{0.78}{ 
    \begin{tabular}{lccccccccccc} 
        \toprule
        Method & {\color{hiccup}SMFP}  & QVPO & DIME & DIPO & TD3 & SAC & PPO & SPO & QSM & MaxEntDP \\
        \midrule
        Training Time (h)   & 2.7  & 5.2 & 3.5 & 9.6 & 0.4 & 2.2 & 0.3 & 0.3 & 0.8 & 2.3 \\
        Inference Time (ms) & {0.42}  & 5.77 & 4.68 & 5.20 & 0.27 & 0.13 & 0.22 & 0.32 &  5.15 & 5.62 \\
        Inference Steps     & 1 & 20 & 16 & 20 & 1 & 1 & 1 & 1 & 20 & 20 \\
        Performance & {7518.3}  & 4040.1 & 7103.6 & 5665.9 & 4583.8 & 5530.6 & 2781.9 & 2100.2 & 4206.4 & 5717.9 \\
        \bottomrule
    \end{tabular}
        }
\end{table}

\section{Implementation Details.}

\subsection{Experimental Details}
\paragraph{Environments}
We evaluate our method on seven continuous control benchmarks from the OpenAI Gym MuJoCo suite~\cite{brockman2016openai, todorov2012mujoco}. Our selection constitutes the set of tasks most widely adopted in prior studies~\cite{ding2024diffusion}, facilitating direct benchmarking against established methods. These environments feature diverse dynamics and state-action spaces, serving as a comprehensive evaluation benchmark. All methods are trained for 1M ($10^6$) environment steps under the same evaluation budget. For statistical reliability, we report mean cumulative returns and standard deviations over 30 independent seeds.

\begin{figure}[htbp]
    \centering   
    \includegraphics[trim=0 400 0 0, clip, width=1\textwidth]{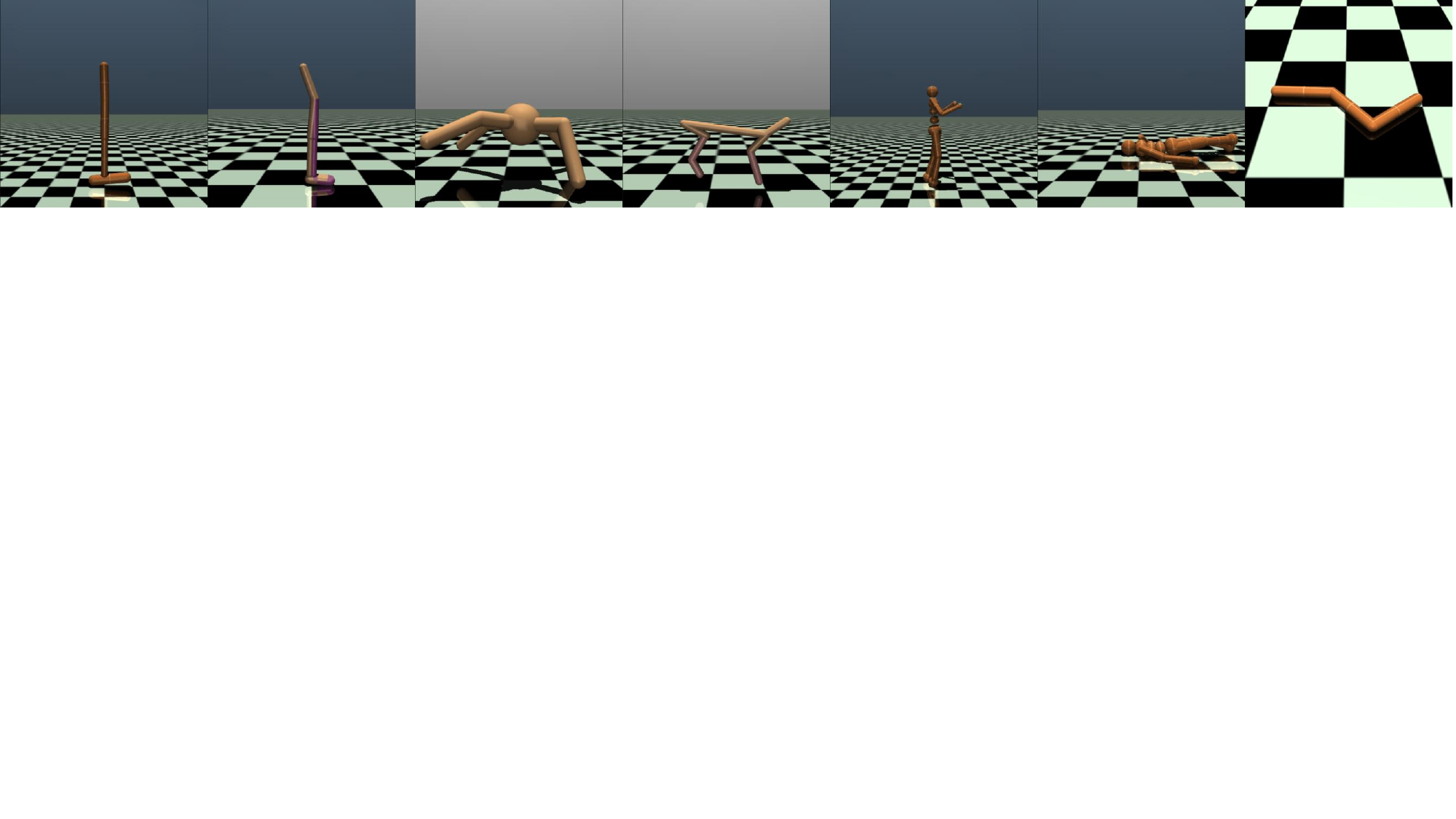} 
    \caption{Considered environments. All these environments are from the mujoco gym benchmark, which consist of Hopper-v4, Walker2D-v4, Ant-v4, HalfCheetah-v4, Humanoid-v4, HumanoidStandup-v4 and Swimmer-v4.} 
    \label{fig:mujoco} 
\end{figure}

\begin{table*}[htbp]
\vspace{-0.3cm}
\caption{Comparative analysis of policy representation and optimisation methodologies across 15 different methods. \textbf{Gaussian}: Uses simple unimodal or deterministic distributions. \textbf{Generative}: Uses diffusion or flow-based Policies. \textbf{SAC-Style}: Uses entropy-regularised updates. \textbf{Mirror Descent (MD)}: Uses value-weighted or energy-based iterative updates.}
\label{tab:method_comparison}
\resizebox{\textwidth}{!}{%
\begin{tabular}{@{}lccccc@{}}
\toprule
\multirow{2}{*}{\textbf{Family}} & \multirow{2}{*}{\textbf{Algorithm}} & \multicolumn{2}{c}{\textbf{Representation Level}} & \multicolumn{2}{c}{\textbf{Optimisation Level}} \\ \cmidrule(lr){3-4} \cmidrule(l){5-6} 
 &  & \textbf{Gaussian Policy} & \textbf{Generative Policy} & \textbf{SAC (Ent. Reg.)} & \textbf{Mirror Descent} \\ \midrule
\multirow{4}{*}{\textbf{Gaussian}} 
 & PPO~\cite{schulman2017PPO} & \cmark & \xmark & \xmark & \xmark \\
 & SPO~\cite{chensimplespo} & \cmark & \xmark & \xmark & \xmark \\
 & TD3~\cite{dankwa2019td3} & \cmark$^{*}$ & \xmark & \xmark & \xmark \\
 & SAC~\cite{haarnoja2018soft} & \cmark & \xmark & \cmark & \xmark \\ \midrule
\multirow{6}{*}{\textbf{Diffusion}} 
 & DIPO~\cite{yang2023dipo} & \xmark & \cmark & \xmark & \cmark \\
 & QSM~\cite{psenka2024qsm} & \xmark & \cmark & \xmark & \cmark \\
 & QVPO~\cite{ding2024diffusion} & \xmark & \cmark & \xmark & \cmark \\
 & DIME~\cite{dime} & \xmark & \cmark & \cmark & \xmark \\
 & MaxEntDP~\cite{MaxEntDP} & \xmark & \cmark & \cmark & \xmark \\
 & DPMD~\cite{maefficientDPMD} & \xmark & \cmark & \xmark & \cmark \\ \midrule
\multirow{3}{*}{\textbf{Flow}} 
 & SAC Flow~\cite{zhang2025sac} & \xmark & \cmark & \cmark & \xmark \\
 & FlowRL~\cite{lv2025flow} & \xmark & \cmark & \xmark & \cmark \\
 & FPMD~\cite{chen2025one} & \xmark & \cmark & \xmark & \cmark \\ \midrule
\textbf{Ours} & \textbf{SMFP} & \cmark & \cmark & \cmark & \cmark \\ \bottomrule
\multicolumn{6}{l}{\footnotesize $^{*}$ TD3 employs deterministic policies with injected noise, representing a specialized variant of conventional stochastic Gaussian policies. }
\end{tabular}%
}
\vspace{-0.3cm}
\end{table*}

\paragraph{Baselines.}  
\label{app:baseline_comparison}
We compare our approach against three categories of online RL algorithms: (1) Classical Model-Free RL: Methods employing Gaussian or deterministic policies with efficient single-step inference ($1\text{-NFE}$), including PPO~\cite{schulman2017PPO}(\url{https://github.com/ikostrikov/pytorch-a2c-ppo-acktr-gail}), SPO~\cite{chensimplespo}(\url{https://github.com/MyRepositories-hub/Simple-Policy-Optimization}), TD3~\cite{dankwa2019td3}(\url{https://github.com/sfujim/TD3}), and SAC~\cite{haarnoja2018soft}(\url{https://github.com/toshikwa/soft-actor-critic.pytorch}). (2) Diffusion-Based RL: Generative policy algorithms such as DIPO~\cite{yang2023dipo}(\url{https://github.com/BellmanTimeHut/DIPO}), QSM~\cite{psenka2024qsm}(\url{https://github.com/escontra/score_matching_rl}), QVPO~\cite{ding2024diffusion}(\url{https://github.com/wadx2019/qvpo}), DIME~\cite{dime}(\url{https://github.com/ALRhub/DIME}), MaxEntDP~\cite{MaxEntDP}(\url{https://github.com/diffusionyes/MaxEntDP}), and DPMD~\cite{maefficientDPMD}(\url{https://github.com/mahaitongdae/diffusion_policy_online_rl}). These methods typically involve iterative sampling, though some are optimised for efficiency. (3) Flow-Based RL: Emerging flow-matching approaches, including SAC Flow~\cite{zhang2025sac}(\url{https://github.com/Elessar123/SAC-FLOW}), FlowRL~\cite{lv2025flow}(\url{https://github.com/bytedance/FlowRL}), and FPMD~\cite{chen2025one}. 
Tab.~\ref{tab:method_comparison} presents a comprehensive comparison of these baselines, where we adopt the hyperparameters reported in the original publications. 

Besides, when reproducing the baselines, we performed extensive hyperparameter sweeps, particularly on environments that were not covered in the original publications. In several cases, our tuned implementations achieved stronger performance than the originally reported results. However, exact reproduction was not always possible; despite extensive tuning, DIME consistently underperformed across the tested configurations. Similar anomalously low reproduced baseline performance has also been reported in prior work, such as SAC Flow~\cite{zhang2025sac}.

To avoid disadvantaging prior methods due to reproduction variance, we adopt a conservative best-available protocol for the tabular results. For each baseline and environment, we report the stronger result between our reproduction and the corresponding score reported in the original paper or a standard benchmark, whenever the evaluation setting is comparable. This protocol intentionally gives each baseline the benefit of the doubt and therefore provides a conservative comparison for SMFP. In contrast, the learning curves show only our reproduced runs, so that all curves are generated under the same implementation, training budget, evaluation protocol, and environment setup. Consequently, minor discrepancies may appear between the curves and the summary tables: the tables compare against the strongest documented baseline performance under comparable settings, whereas the curves reflect controlled training dynamics within our reproduced setting.

\paragraph{Critic network architectures.}
Following MeanFlowQL \cite{meanflowql}, we employ $[512, 512, 512, 512]$-sized multi-layer perceptrons (MLPs) for critic function networks.
To improve training stability, we also apply layer normalization to the value networks.

\paragraph{Actor Network architectures.}
Since our method heavily relies on accurate modelling of the time variable, particularly in the context of Jacobian-vector product (JVP) computations, we choose DiT as the underlying network architecture.
We use a shallow Transformer-based policy adapted from DiT ~\cite{dit}, with 3 layers, 2 attention heads, and a hidden size of $256$. A feature embedding maps 1D observation-action inputs, and the network includes time and positional embeddings, residual mixing (scale 0.1). Specifically, we apply gradient clipping (norm 1.0) and zero-initialised final projection for stability.

\section{Hyperparameters}
\label{sec:hyper}
Our hyperparameter configuration is organized into two main categories. The first encompasses general-purpose settings that are held constant across all tasks, including network architecture and optimisation details (see Tab.~\ref{tab:hyperparameters_general}). The second category includes task-specific parameters—such as the mirror descent coefficient $\lambda$—which are tuned for each environment to ensure stable training dynamics and optimal performance (see Tab.~\ref{tab:task-specific}). 

To facilitate the adaptation of our method to novel environments, we provide a recommended hyperparameter search space in Tab.~\ref{tab:task-search}. As a starting point, we advise utilising the Default Values, which have empirically demonstrated robustness across a variety of tasks. 
If the default settings do not perform satisfactorily, we recommend conducting a grid search over the listed Candidate Values. For efficient hyperparameter tuning, we suggest a sequential approach: fix the entropy temperature $\alpha$ first, and proceed to search for the mirror descent coefficient $\lambda$ once the critic has stabilised (i.e., Q-values exhibit steady convergence).
Experience suggests that tuning within this constrained space typically secures competitive performance.

\begin{table}[htbp]
\centering
\caption{Recommended Hyperparameter Search Space for New Environments}
\label{tab:task-search}
\small
\renewcommand{\arraystretch}{1.1}
\begin{threeparttable}
    \begin{tabularx}{0.95\linewidth}{l c >{\raggedright\arraybackslash}X >{\raggedright\arraybackslash}X}
    \toprule
    \textbf{Parameter} & \textbf{Default} & \textbf{Candidate Values} & \textbf{Description} \\
    \midrule
    
    q\_agg & min & $\{\text{min}, \text{mean}\}$ & Target Q aggregation method \\
    
    $\kappa_{\sigma}$ & -3 & $\{-1,-3,-5,-7\}$ & Entropy threshold in SAC \\
    
    $\alpha$ & 0.2 & $\{0.01, 0.05, 0.1, 0.2,0.4,1\}$ & Entropy temperature in SAC \\
    
    $\lambda$ & 0.3 & $\{0.03, 0.1, 0.3, 1, 3, 10\}$ & Mirror descent regularisation coefficient \\

    num\_candidates ($K_b$) & 8 & $\{1, 4, 8, 16, 32\}$ & Number of Candidates \\
    \bottomrule
    \end{tabularx}
\end{threeparttable}
\end{table}

\begin{table*}[htbp]
\caption{
\footnotesize
\textbf{Task-Specific Hyperparameter Settings.}
We report the mirror descent regularisation coefficient ($\lambda$) used for each environment.
}
\label{tab:task-specific}
\centering
\vspace{3pt}
\resizebox{\textwidth}{!}{
\begin{threeparttable}
\setlength{\tabcolsep}{6pt} 
\begin{tabular}{lccccccc}
\toprule
\textbf{Environment} & \textsc{Hopper-v4} & \textsc{Walker2D-v4} & \textsc{HalfCheetah-v4} & \textsc{Ant-v4} & \textsc{Humanoid-v4} & \textsc{HumanoidStandup-v4} & \textsc{Swimmer-v4} \\
\midrule
\textbf{Lambda ($\lambda$)} & 3 & 3 & 0.3 & 0.3 & 0.3 & 0.3 & 3 \\
\bottomrule
\end{tabular}
\end{threeparttable}
}
\end{table*}

\begin{table}[htbp]
\centering
\caption{Hyperparameters in Experiments}
\label{tab:hyperparameters_general}
\small
\renewcommand{\arraystretch}{1.15}
\begin{threeparttable}
\begin{tabularx}{0.88\linewidth}{l c L}
\toprule
\textbf{Parameter} & \textbf{Value} & \textbf{Description} \\
\midrule
\rowcolor{mask!88}
\multicolumn{3}{c}{\textbf{General Settings}} \\
batch\_size & 256 & Number of samples per batch \\
q\_agg & min \tnote{+} & Target Q aggregation method \\
normalize\_q\_loss & True & Normalize Q loss term in Eq.\ref{eq:overall_loss_tractable} \\
noise\_type & gaussian & Noise distribution type \\
optimiser & Adam~\citep{kingma2014adam} & Optimizer of networks \\
gradient steps & 1000000 & Total gradient updates \\
discount $\gamma$ & 0.99 & Keep same as QVPO\citep{ding2024diffusion} \\
lr & $3\mathrm{e}{-4}$ & Learning rate \\
lr\_schedule & cosine\_with\_warmup & Learning rate schedule \\
lr\_min\_ratio & 0.1 & Minimum ratio in cosine scheduler \\
\rowcolor{mask!88}
\multicolumn{3}{c}{\textbf{Critic Network}} \\
value\_hidden\_dims & (512, 512, 512, 512) & Hidden layer sizes \\
layer\_norm & True & Apply layer normalisation \\

gradient clipping & 1.0 & Maximum gradient norm \\
network initialisation & kaiming\_init & Network initialisation techniques \\

\rowcolor{mask!88}
\multicolumn{3}{c}{\textbf{Actor Network}} \\
actor\_hidden\_dims & 256 & Hidden size of actor MLP \\
actor\_depth & 3 & Number of transformer layers \\
actor\_num\_heads & 2 & Number of attention heads \\
actor\_layer\_norm & False & Use layer norm in actor \\
discount & 0.99 & Discount factor $\gamma$ \\
tau & 0.005 & Target network update rate \\
tanh\_squash & False & Use tanh at actor output \\
use\_output\_layernorm & False & Norm at final actor output \\
gradient clipping & 1.0 & Maximum gradient norm \\
network initialisation & zero\_init & Network initialisation techniques \\
$\kappa_{\sigma}$ & -3 \tnote{+} & Entropy threshold in SAC \\
time\_steps & 100 & Time steps of MeanFlow (Keep Same as \cite{meanflowql}) \\

\rowcolor{mask!88}
\multicolumn{3}{c}{\textbf{Best-of-N Sampling}} \\
num\_candidates (K) & 8\tnote{+} & Number of Candidates \\
action\_mode & best & Action selection strategy \\
$N_d$ & 64 & Number of samples from old policy\cite{ding2024diffusion} \\
Transform & $qadv$ & Q-weight transformation function \cite{ding2024diffusion}\\

\rowcolor{mask!88}
\multicolumn{3}{c}{\textbf{SMFP}} \\
$\alpha$ & 0.2 \tnote{+}  & Entropy temperature hyperparameter (SAC) \\
$\lambda$ & Task-dependent &  See Tab.~\ref{tab:task-specific} \\ 
\bottomrule
\end{tabularx} 
\begin{tablenotes}
    \footnotesize
    \item[+] Task-dependent. For new environments, we recommend performing a grid search over the candidate values (refer to Tab.~\ref{tab:task-search}) to achieve optimal performance.
\end{tablenotes}
\end{threeparttable}
\end{table}



\end{document}